\title{Mind the Gap: Understanding the Modality Gap in \\Multi-modal Contrastive Representation Learning}
\newtheorem{theorem}{Theorem}
\newtheorem{lemma}[theorem]{Lemma}
\definecolor{darkblue}{rgb}{0, 0, 0.5}
\definecolor{bluenode}{HTML}{00a7db}
\definecolor{rednode}{HTML}{ea4e00}
\definecolor{purplenode}{HTML}{9d00d7}
\definecolor{graynode}{HTML}{898989}
\definecolor{ForestGreen}{RGB}{34,139,34}
\author{%
    Weixin Liang\thanks{These three authors contributed equally.}\\
  Stanford University\\
  \texttt{wxliang@stanford.edu} \\
  \And
  Yuhui Zhang \footnotemark[1] \\
  Stanford University\\
  \texttt{yuhuiz@stanford.edu} \\
  \And
  Yongchan Kwon \footnotemark[1] \\
  Columbia University \\
  \texttt{yk3012@columbia.edu} \\
  \And
  Serena Yeung \\
  Stanford University\\
  \texttt{syyeung@stanford.edu} \\
  \And
  James Zou \\
  Stanford University\\
  \texttt{jamesz@stanford.edu} \\
}
\begin{document}

\maketitle

\begin{abstract}

We present \emph{modality gap}, an intriguing geometric phenomenon of the representation space of multi-modal models. Specifically, we show that different data modalities (e.g. images and texts) are embedded at arm's length in their shared representation in multi-modal models such as CLIP. Our systematic analysis demonstrates that this gap is caused by a combination of model initialization and contrastive learning optimization. In model initialization, we show empirically and theoretically that the representation of a common deep neural network is restricted to a narrow cone. As a consequence, in a multi-modal model with two encoders, the representations of the two modalities are clearly apart when the model is initialized. 
During optimization,  contrastive learning keeps the different modalities separated by a certain distance, which is influenced by the temperature parameter in the loss function. 
Our experiments further demonstrate that varying the modality gap distance has a significant impact in improving the model's downstream zero-shot classification performance and fairness.
Our code and data are available at 
\url{https://modalitygap.readthedocs.io/} 
\end{abstract}

\section{Introduction}\label{sec:intro}

\begin{figure*}[!tb]
    \centering
    \includegraphics[width=0.9\textwidth]{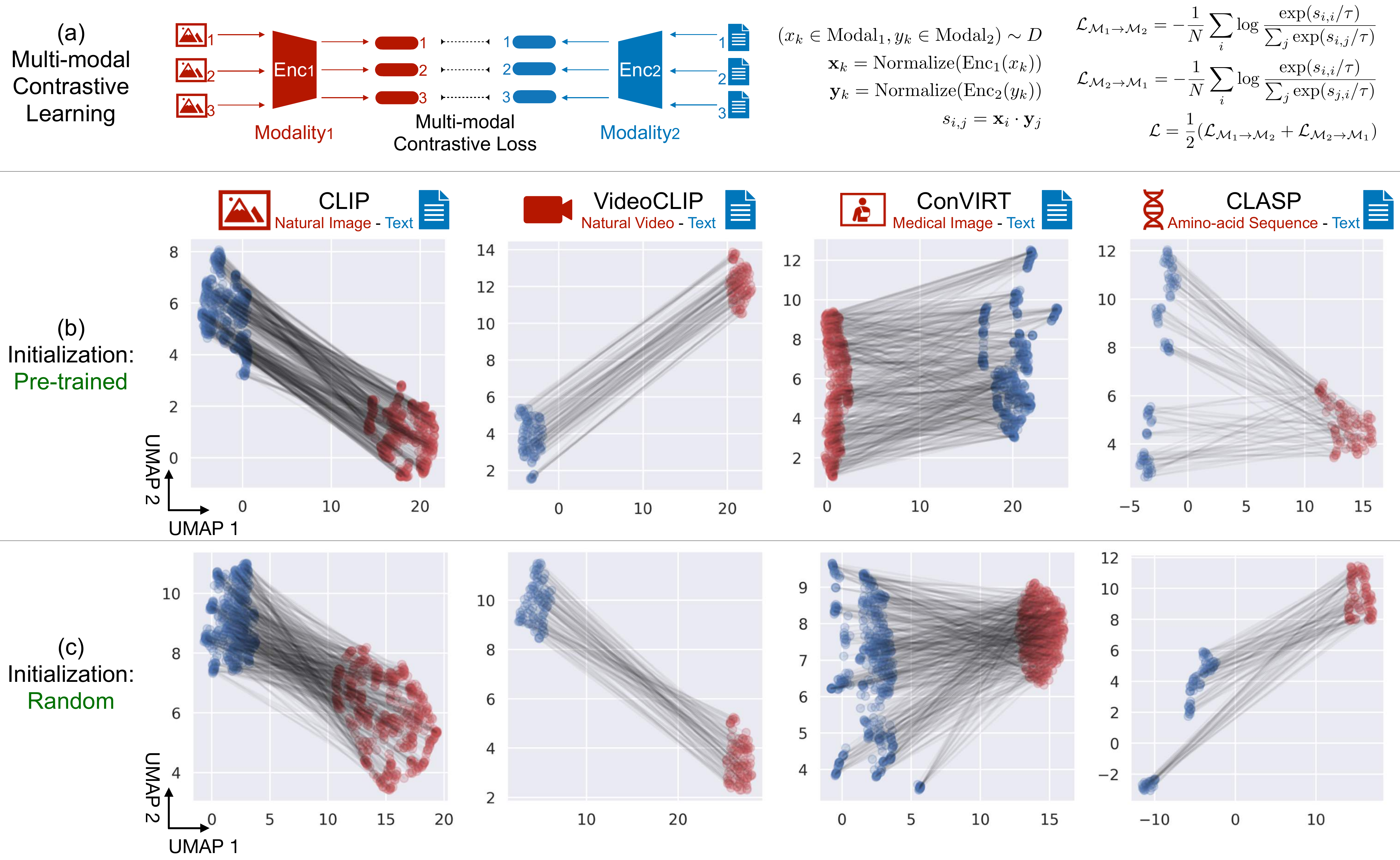}
    \caption{
    \small
    \textbf{The pervasive \emph{modality gap} in multi-modal contrastive representation learning.} 
    \textbf{(a) Overview of multi-modal contrastive learning.} Paired inputs from two modalities (e.g., image-caption) are sampled from the dataset and embedded into the hypersphere using two different encoders. The loss function is to maximize the cosine similarity between matched pairs given all the pairs within the same batch. 
    \textbf{(b) UMAP visualization of generated embeddings from pre-trained models.} Paired inputs are fed into the pre-trained models and the embeddings are visualized in 2D using UMAP (lines indicate pairs). We observe a clear modality gap for various models trained on different modalities.
    \textbf{(c) UMAP visualization of generated embeddings from same architectures with random weights.} Modality gap exists in the initialization stage without any training.
    }
    \label{fig:pull}
\end{figure*}

Multi-modal models map inputs from different data modalities (e.g. image and text) into a shared representation space (Figure~\ref{fig:pull} (a)). It has garnered tremendous interest and excitement as a framework for data integration.
As a prominent example pre-trained on a web-scale collection of images and natural language, OpenAI’s CLIP model~\citep{clip}, has learned diverse visual concepts that can readily be transferred to downstream tasks through \emph{prompting}: one can perform “zero-shot” visual classification by simply providing the names of the visual categories to be recognized.

In this work, we present the \emph{modality gap} phenomenon: 
As shown in Figure~\ref{fig:pull} (b), 
CLIP's image embeddings and text embeddings are located in two completely separate regions of the embedding space. 
We find this phenomenon consistently across various multi-modal models, covering texts, natural images~\citep{clip}, videos~\citep{VideoCLIP}, medical images~\citep{ConVIRT}, and amino-acid sequences~\citep{CLASP}. 
Interestingly, this phenomenon still holds even when we embed using multi-modal models with \emph{random} weights (Figure~\ref{fig:pull} (c)). 
While it might seem reasonable to attribute the gap to differences in data distributions or to the different encoder architectures, we showed that these factors are not the fundamental cause.

This paper provides a three-part explanation for the modality gap phenomenon.  
\textbf{(1)} 
The general inductive bias of deep neural architecture creates a \emph{cone effect}: The effective embedding space is restricted to a narrow cone for pre-trained models or models with random weights. 
\textbf{(2)} 
Different random initializations create different embedding cones. 
Since a multi-modal model consists of two encoders, which create different cones at random initialization, this explains how the modality gap is present at initialization. 
\textbf{(3)} 
The contrastive learning objective commonly used by multi-modal models preserves the gap. 
We support our explanations with theory and experiments.  
Our theoretical analysis shows that under mild assumptions, each neural network layer shrinks the angle between any pair of embedding vectors with high probability, thereby creating more narrow cones in deeper architectures. 
We further prove that different random initializations of model weights result in different cones. Interestingly, increasing the modality gap in models like CLIP can improve its downstream performance on several zero-shot learning and fairness tasks.  
The main objective of our paper is to i) empirically demonstrate the modality gap phenomenon across different data modalities and NN architectures; ii) explain how the gap arises and iii) show that the size of the gap can affect downstream applications. It is \emph{not} our goal to propose a method to close the gap, since it's not clear that it's desirable to have no modality gap.
Together, this paper makes the \textbf{following contributions}:
\begin{enumerate}[leftmargin=*,nolistsep]
\itemsep0em
    
  \item 
  To the best of our knowledge, we demonstrate a general \emph{modality gap} phenomenon for the first time. 
  We show that this phenomenon holds across a wide spectrum of multi-modal models, covering texts, natural images, videos, medical images, and amino-acid sequences. %
  
  \item 
  We demonstrate the significant implications of modifying the gap in downstream applications.
  By simply modifying the gap's distance, we can improve CLIP's zero-shot performance and fairness.

  \item 
  To explain modality gap, we provide a three-part explanation supported by extensive theoretical and empirical analyses. Our analyses also provide new insights on the cone effect, which we show is a general phenomenon for deep neural networks. Existing work focuses on \emph{trained} language models and attributes the cone effect to the \emph{optimization} under unbalanced word frequencies distribution. 
  We demonstrate that this effect holds not only across various modalities and network architectures, but also on random noise inputs and random weights, which is not captured in previous work. 
  
  \item 
  We mathematically characterize the contraction mapping induced by linear layers with ReLU non-linearities to explain the cone effect. 
  Our theory matches well with experiments and provides insights for understanding the general inductive biases of deep neural networks. 

\end{enumerate}

\begin{figure*}[tb]
    \centering
    \includegraphics[width=\textwidth]
    {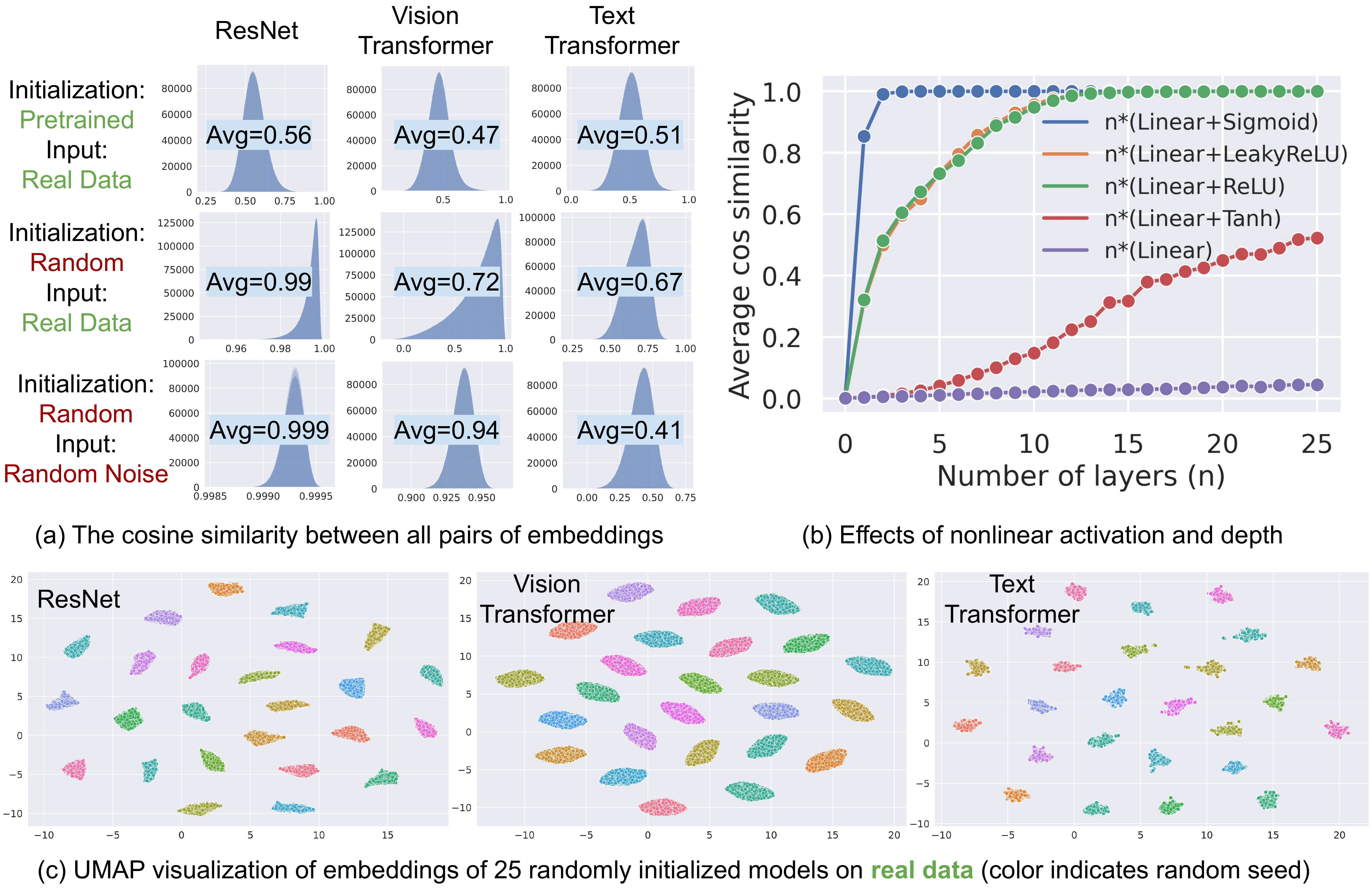}
    \caption{
    \small
    \textbf{The cone effect phenomenon.} 
    \textbf{(a) Histograms of the cosine similarity between all pairs of embeddings across various settings.}
    The average cosine similarity is substantially larger than 0, indicating that the embedding space is a narrow cone. The cone effect also holds on randomly initialized models, and on random noise inputs. 
    \textbf{(b) Effects of nonlinear activation and depth.}  
    Inputs are 512-dim standard normal random vector. 
    All MLP linear layers are $512\times512$, with both weight and bias randomly initialized from $\mathcal{N}(0, \frac{1}{512})$.
    Y axis is the average cosine similarity between pairs of embeddings. 
    \textbf{(c) UMAP visualization of embeddings of 25 randomly initialized models (without training) on \emph{real} data.} Each random initialization forms a distinctively different cone. 
    \textit{Real Data:} 5,000 image-caption pairs from the validation set of MSCOCO Caption. 
    \textit{Random Noise:} Gaussian noise from the standard normal distribution as images, uniformly random integer sequences as texts.
    }
    \label{fig:cone-cos}
\end{figure*}

\section{The Cone Effect Induces A Modality Gap}
\label{sec:cone}

\subsection{The Narrow Cone of Embeddings}
\label{subsec:cone}
In order for modality gap to exist, the embeddings from a encoder should be concentrated around a subregion of the full embedding space---otherwise, the embeddings from different encoders would overlap. 
Motivated by this, we begin our investigation by showing that the modality gap already arises at random model initialization due to the \emph{cone effect}: The effective embedding space is restricted to a narrow cone for trained models and models with random weights. 
To demonstrate this, we extract 5,000 embeddings from the final layer of 3 pre-trained models respectively (ResNet, Vision Transformer, Text Transformer)\footnote{
ResNet embeddings are extracted before the final linear layer. 
We use ResNet-18 pre-trained on ImageNet, Vision Transformer and Text Transformer from pre-trained CLIP} on MSCOCO Caption~\citep{COCO-Caption}. 
We then compute the cosine similarity between all possible pairs of the 5,000 embeddings within each model (Figure~\ref{fig:cone-cos} (a)). 
We found that both the average cosine similarity ($0.56$, $0.47$, $0.51$ respectively for the 3 models) and the minimum cosine similarity ($0.23$, $0.05$, $0.01$) are positive. 
These results indicate that the embedding space is a narrow cone.

In the literature, the cone effect has been observed in the language representations from language models (e.g., BERT)~\citep{DBLP:conf/emnlp/Ethayarajh19}. 
A common explanation is that the \emph{unbalanced} distribution of word frequencies biased the \emph{optimization}~\citep{RepresentationDegeneration,DBLP:conf/emnlp/LiZHWYL20}. 
However, we found that the cone effect still exists in models with random weights  (Figure~\ref{fig:cone-cos} (c)). In fact, the average cosine similarity there is even \emph{higher} than in trained models. For example, any two embeddings from a randomly initialized ResNet have on average an almost perfect ($0.99$) cosine similarity. 
Interestingly, the cone effect still holds when the input data is random noise\footnote{Standard normal distribution for vision models, and uniformly random integer sequences for text models. }, indicating that unbalanced data distribution suggested in previous works is not necessary for the cone effect. Together these experiments suggest that the cone effect reflects a more general inductive bias of deep networks than might be previously appreciated.

\paragraph{How narrow is the cone in 512-dim representation space?} 
We clarify that a cosine similarity with $0.56$ already indicates that the embedding space is actually an extremely narrow cone in the 512-dimensional feature space. Consider the fraction of surface area in a unit hypersphere: 
In 2D, arccos(0.56)=55.94°, indicating that a cosine similarity of 0.56 can “occupy” 55.94°/360°=15.53\% of the 2D unit circle. 
In 3D, a cosine similarity of 0.56 can “occupy” $\frac{2 \pi r^2 (1- \cos \frac{55.94 \degree}{2})}{4 \pi r^2}$=3.34\% of the 3D unit sphere. 
In 512D, a cosine similarity of 0.56 can “occupy” less than $\frac{1}{2^{512}}$ fraction of the surface area in a unit 512D hypersphere.
These evidences show that the effective embedding space is restricted to an extremely narrow cone.

\subsection{The effects of non-linear activation on cone effect}
\label{subsec:cone-MLP}

\paragraph{Design} 
To study the effects of non-linear activation functions on the cone effect, we randomly initialized various MLPs with different non-linearities or without non-linearities. The inputs of the MLPs are 512-dim standard normal random vectors. All MLP linear layers are $512\times512$, with both weight and bias randomly initialized from $\mathcal{N}(0, \frac{1}{512})$, here we denote a Gaussian distribution with mean $\mu$ and variance $\sigma^2$ by $\mathcal{N}(\mu, \sigma^2)$.

\paragraph{Results} 
As shown in Figure~\ref{fig:cone-cos} (b), MLPs without non-linear activation shows little cone effect. However, with non-linearity, the average cosine similarity increases \emph{rapidly} as the number of layers increases. 
For example, the average cosine similarity reaches $0.99$ for a 2-layer MLP with Sigmoid. 
These results indicate that the non-linear activation functions play a crucial role in the cone effect.

Although it is easy to see that ReLU makes every coordinate non-negative, and thus cosine similarity after ReLU is guaranteed to be non-negative, 
we highlight that none of the 3 models in Figure~\ref{fig:cone-cos} (a) has ReLU as the final layer before embedding extraction\footnote{The last 3 layers are Conv2d, BatchNorm2d, AdaptiveAvgPool2d for ResNet-18 (not counting last fc); Linear, LayerNorm, LayerNorm for Vision Transformer in CLIP; QuickGELU, Linear, LayerNorm for Text Transformer in CLIP.}. 
In addition, although all 3 models incorporate normalization layers such as batch norm~\citep{ioffe2015batch} and layer norm~\citep{ba2016layer} in their architectures, we still observe the cone effect. 
Further analyzing the connection between normalization and the cone effect is an interesting direction of future work.

\subsection{Different random initializations create different cones}
\label{subsec:two-cones}

Next, we study the effect of different random initialization on the cone effect. 
In Figure~\ref{fig:cone-cos} (c), we randomly initialized a model 25 times, and plotted its extracted embeddings on the same \emph{real data} (i.e., MSCOCO Caption) via UMAP visualization~\citep{UMAP}. 
We found that each random initialization forms a distinctively different cone. This phenomenon holds across various neural network architectures and input modalities (ResNet, Vision Transformer or Text Transformer), 
on ImageNet-pretrained models (Supp. Figure~\ref{fig:appendix-imagenet-pretrained-cones}), 
on PCA visualization (Supp. Figure~\ref{fig:appendix-scatter-cone-pca-real}), or with random noise inputs (Supp. Figure~\ref{fig:appendix-scatter-cone-random-data}). 
Since a multi-modal model consists of two encoders, which creates different cones at random initialization, this explains how the modality gap is present at initialization. 
\textit{While it might seem reasonable to attribute the modality gap to differences in data modalities~\cite{multimodel-heterogeneity-gap}, 
Figure~\ref{fig:cone-cos} (c) shows the gap still exists even if the two encoders operate on the exact same data in the exact same modality. 
Therefore, the gap can exist without different modalities, and we emphasize that the modality gap phenomenon is non-trivial to understand.
}

\section{Theoretical analysis}
\label{sec:theory}
Here, we theoretically investigate the cone effect phenomenon. We show that (i) the cosine similarity increases as the layer gets deeper and (ii) the variance of an intermediate output mostly comes from the model's random initialization.

We first define some notations. We denote the ReLU activation by $\phi(x):=\max(x,0)$ for $x \in \mathbb{R}$, and we extend it by considering element-wise operation $\phi(\mathbf{x}):=(\phi(x_1),\dots,\phi(x_k) )^T=(\max(x_1,0), \dots, \max(x_k,0))^T$ for a multivariate input $\mathbf{x} =(x_1, \dots, x_k)^T \in \mathbb{R}^k$ and $k \in \mathbb{N}$. The cosine similarity between two vectors $u,v\in \mathbb{R}^k$ is defined as $\cos(u,v) := \frac{u^T v}{\norm{u} \norm{v}}$ where $\norm{u} = (u^T u)^{1/2}$. Lastly, we set $[k] := \{1, \dots, k\}$ for $k \in \mathbb{N}$.

\paragraph{Each network layer increases cosine similarity.}
We study how the cosine similarity between two intermediate layer outputs changes when weight and bias terms in an MLP are fixed. The following theorem shows that with a high probability cosine similarity increases after one feedforward computation when the number of nodes in the output layer is large.

\begin{theorem}[Monotonicity of cosine similarity]
Suppose $u,v \in \mathbb{R}^{d_{\mathrm{in}}}$ are any two fixed vectors such that $\norm{u} = r \norm{v}$ for some $r>0$, $\mathbf{W} \in \mathbb{R}^{d_{\mathrm{out}} \times d_{\mathrm{in}}}$ is a random weight matrix where each element $\mathbf{W}_{k,l} \sim \mathcal{N}(0, d_{\mathrm{out}} ^{-1})$ for $k \in [d_{\mathrm{out}}]$, $l \in [d_{\mathrm{in}}]$, and $\mathbf{b} \in \mathbb{R}^{d_{\mathrm{out}}}$ is a random bias vector such that $\mathbf{b}_k \sim \mathcal{N}(0, d_{\mathrm{out}}^{-1})$ for $k \in [d_{\mathrm{out}}]$.
If $\cos(u,v) <\left( \frac{1}{2} \left( r+ \frac{1}{r} \right) \right)^{-1}$, then the following holds with probability at least $1-O(1/d_{\mathrm{out}})$.
\begin{align*}
    \cos(\phi(\mathbf{W} u+\mathbf{b}), \phi(\mathbf{W} v + \mathbf{b})) > \cos(u,v).
\end{align*}
\label{thm:cosine_similarity}
\end{theorem}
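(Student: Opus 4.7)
The plan is to compute a deterministic limit that $\cos(\phi(\mathbf{W} u + \mathbf{b}), \phi(\mathbf{W} v + \mathbf{b}))$ concentrates around, show that this limit already exceeds $\cos(u, v)$ under the stated hypothesis, and then transfer the strict inequality to the random quantity by a concentration argument with failure probability $O(1/d_{\mathrm{out}})$.

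\textbf{Pre-activation distribution and the arc-cosine kernel.} The rows of $\mathbf{W}$ together with the corresponding entries of $\mathbf{b}$ are independent, so the pairs $X_k := (\mathbf{W} u + \mathbf{b})_k$ and $Y_k := (\mathbf{W} v + \mathbf{b})_k$ are i.i.d.\ across $k \in [d_{\mathrm{out}}]$, each a centered bivariate Gaussian with variances $\sigma_{X_k}^2 = (\|u\|^2 + 1)/d_{\mathrm{out}}$ and $\sigma_{Y_k}^2 = (\|v\|^2 + 1)/d_{\mathrm{out}}$ and covariance $(u^T v + 1)/d_{\mathrm{out}}$; their correlation is $\rho := (u^T v + 1)/\sqrt{(\|u\|^2+1)(\|v\|^2+1)}$. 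The standard ReLU arc-cosine-kernel computation then yields $E[\phi(X_k)^2] = \sigma_{X_k}^2/2$ and $E[\phi(X_k)\phi(Y_k)] = \frac{\sigma_{X_k}\sigma_{Y_k}}{2\pi}\bigl(\sin\theta + (\pi-\theta)\cos\theta\bigr)$ with $\theta := \arccos\rho$. Their ratio $g(\rho) := (\sin\theta + (\pi-\theta)\cos\theta)/\pi$ is the deterministic target: $g$ is strictly increasing on $[-1,1]$ (derivative $(\pi-\theta)/\pi > 0$) and $g(\rho) > \rho$ for all $\rho \in (0,1)$, equivalently $\tan\theta > \theta$ on $(0,\pi/2)$.

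\textbf{The algebraic crux.} Write $s := \|v\|$ and $c := \cos(u,v)$, so $u^T v = r s^2 c$ and $\|u\|^2 = r^2 s^2$. A direct expansion shows that $\rho > c$ is equivalent to $s^2 c\bigl(2r - c(r^2+1)\bigr) + (1 - c^2) > 0$; since $1 - c^2 > 0$ (as $c < 1$ under the hypothesis), this holds for every $s > 0$ as soon as $c(r^2+1) \le 2r$, i.e.\ exactly $c \le 2r/(r^2+1) = \bigl(\tfrac{1}{2}(r + 1/r)\bigr)^{-1}$, which is the hypothesis. Thus $\rho > c$ whenever $c \ge 0$, and since $\rho > 0$ we then get $g(\rho) > \rho > c$. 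The remaining case $c < 0$ is immediate: $\phi$ maps into the non-negative orthant, so the post-activation cosine similarity is $\ge 0 > c$.

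\textbf{Concentration and main obstacle.} Each of $\sum_k \phi(X_k)\phi(Y_k)$, $\sum_k \phi(X_k)^2$, $\sum_k \phi(Y_k)^2$ is a sum of $d_{\mathrm{out}}$ i.i.d.\ sub-exponential summands with mean $\Theta(1)$ and variance $\Theta(1/d_{\mathrm{out}})$, so by Chebyshev's inequality each is within an arbitrary fixed tolerance of its mean with probability $1 - O(1/d_{\mathrm{out}})$. A union bound together with the local Lipschitz property of $(A,B,C) \mapsto A/\sqrt{BC}$ near its (bounded-away-from-zero) mean then transfers this concentration to $\cos(\phi(\mathbf{W}u + \mathbf{b}), \phi(\mathbf{W}v + \mathbf{b}))$, and taking the tolerance smaller than the strictly positive gap $g(\rho) - c$ from the previous step yields the claim. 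The real obstacle is not any single step but the algebraic identity linking the hypothesis $c < \bigl(\tfrac{1}{2}(r+1/r)\bigr)^{-1}$ to the uniform-in-$s$ inequality $\rho > c$\,---\,this is exactly what pins down the particular threshold in the theorem statement, and once $g(\rho)$ has been identified as the deterministic limit, the rest is routine concentration bookkeeping.
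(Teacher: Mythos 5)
Your proof is correct, and it reaches the same intermediate quantity $\rho = (1+u^Tv)/\sqrt{(1+\|u\|^2)(1+\|v\|^2)}$ and the same algebraic criterion for $\rho > \cos(u,v)$ as the paper, with a matching Chebyshev concentration argument and $O(1/d_{\mathrm{out}})$ failure bound. The one genuinely different ingredient is how you handle the expectation of the post-activation cross-term. You invoke the closed-form arc-cosine kernel, $\mathbb{E}[\phi(X)\phi(Y)] = \tfrac{\sigma_X\sigma_Y}{2\pi}(\sin\theta + (\pi-\theta)\cos\theta)$, identify the exact deterministic limit $g(\rho)$ of the post-activation cosine, and prove $g(\rho) > \rho$ via $\tan\theta > \theta$. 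The paper instead proves a one-sided bound $\mathbb{E}[\phi(\mathbf{W}u+\mathbf{b})^T\phi(\mathbf{W}v+\mathbf{b})] \geq \tfrac{1}{2}(1+u^Tv)$ (their Lemma~\ref{lem:cross_product}) by decomposing $\max(\cdot,0)\max(\cdot,0) + \min(\cdot,0)\min(\cdot,0)$ and exploiting the sign-symmetry of the Gaussian weights; this bound is exactly equivalent to $g(\rho) \geq \rho$ but is obtained without any special-function computation and, being an inequality in expectation for general symmetric weight laws, is a bit more elementary and slightly more general. Your route buys the precise limiting value $g(\rho)$ (informative if one wanted quantitative rates or multi-layer iteration of the bound), while the paper's route buys a shorter, self-contained lemma. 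Everything else — the $c<0$ case handled by non-negativity of the ReLU orthant, the equivalence of the hypothesis to $\rho > c$ for all $\|v\|$, and the Chebyshev-plus-local-Lipschitz transfer to the ratio — coincides with the paper's Steps 1–2.
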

\vspace{-8mm}
Theorem~\ref{thm:cosine_similarity} shows that the cosine similarity between two vectors increases with a high probability after one feedforward computation consisting of a linear transformation and ReLU computation. This matches well with the result in Figure~\ref{fig:cone-cos} (b) where the cosine similarity between samples increases as the intermediate layer gets farther from the input. 

The bound condition on $\cos(u,v)$ in Theorem~\ref{thm:cosine_similarity} asks that the two vectors before the layer computation are not too close to each other in terms of the direction. This is because the random bias addition can slightly change the angle between the two vectors, leading to a small decrease in cosine similarity when the previous layer's cosine similarity is too high. This condition is plausible in practice because the $\ell^2$-norm of intermediate layer outputs is close to one with a high probability when the $\ell^2$-norm of input data is one \citep[Lemma 7.1]{allen2019convergence}. Given that the norm ratio $r$ is close to one, the upper bound condition for $\cos(u,v)$ is likely to hold because $(\frac{1}{2}(r+\frac{1}{r}))^{-1}$ is close to 1.

\paragraph{Effect of random initialization}
We now examine the variance of an intermediate output and explain that the variance is mainly due to random initializations as in Figure~\ref{fig:cone-cos} (c). To be more specific, we denote an intermediate layer output by $h_{\Theta}(U) \in \mathbb{R}$ for some input datum $U$. Here, $\Theta$ denotes all the random weights and biases that are used in $h_{\Theta}(U)$. The variance of $h_{\Theta}(U)$ can be decomposed as
\begin{align*}
    \mathrm{Var}[ h_{\Theta}(U) ] = \underbrace{\mathbb{E}[\mathrm{Var}[ h_{\Theta}(U) \mid \Theta ] ]}_{\text{Due to the randomness of data}} + \underbrace{\mathrm{Var}[\mathbb{E}[h_{\Theta}(U)  \mid \Theta ] ]. }_{\text{Due to random initializations}}
\end{align*}
Here, the inner and outer expectations are over the data $U$ and the random weights $\Theta$, respectively.  
The first term on the right hand side explains the within variance after fixing one random initialization, quantifying the randomness of data. In contrast, the second term explains the variance due to different random initializations. The following theorem considers the ratio of the second term to the total variance and shows that the ratio can be very close to one when a deep neural network model is used.

\begin{theorem}[Informal; Variance due to different random initializations]
Let $h_{\Theta}(U)$ be an intermediate layer output with an input data $U$ with $\norm{U}=1$. Under mild assumptions on $\Theta$, the set of all the random weights and biases, the following inequality holds.
\begin{align*}
    \frac{\mathrm{Var}[\mathbb{E}[h_{\Theta}(U)  \mid \Theta ] ]}{ \mathrm{Var}[ h_{\Theta}(U) ] } \geq \beta,
\end{align*}
where $\beta$ is a constant that captures the average cosine similarity of previous layer outputs.
\label{thm:fixed_weight_variance}
\end{theorem}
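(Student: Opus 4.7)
The plan is to use the classical law of total variance,
\[
\mathrm{Var}[h_{\Theta}(U)] = \mathbb{E}[\mathrm{Var}[h_{\Theta}(U) \mid \Theta]] + \mathrm{Var}[\mathbb{E}[h_{\Theta}(U) \mid \Theta]],
\]
and show that the within-initialization term $\mathbb{E}[\mathrm{Var}[h_{\Theta}(U) \mid \Theta]]$ is at most a $(1-\beta)$ fraction of the total, which by rearrangement gives the stated ratio bound. The mechanism is the cone effect from Theorem~\ref{thm:cosine_similarity}: conditional on $\Theta$, the previous layer outputs $x(U;\Theta_-)$ for different inputs $U$ are highly aligned, so the downstream scalars $h_{\Theta}(U)$ vary little across $U$; the strength of this conditional concentration, measured relative to the total variance, is what $\beta$ records.

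\textbf{Step 1: controlling the within-initialization variance.} I would write $h_{\Theta}(U) = \phi(\mathbf{W}^T x(U;\Theta_-) + b)$, where $\Theta_- \subset \Theta$ consists of all weights below the final layer and $(\mathbf{W},b)$ is the final affine map, drawn independently of $\Theta_-$ with coordinates i.i.d.\ $\mathcal{N}(0,\sigma^2)$ for some $\sigma^2$ scaled inversely with width. Because $\phi$ is $1$-Lipschitz, for two i.i.d.\ inputs $U,U'$ we have
\[
(h_{\Theta}(U) - h_{\Theta}(U'))^2 \le (\mathbf{W}^T (x(U;\Theta_-) - x(U';\Theta_-)))^2.
\]
Writing $\mathrm{Var}[h_{\Theta}(U) \mid \Theta] = \tfrac{1}{2}\mathbb{E}_{U,U'}[(h_{\Theta}(U) - h_{\Theta}(U'))^2]$ and integrating out $\mathbf{W}$ using its independence from $\Theta_-$ and $\mathbb{E}_{\mathbf{W}}[(\mathbf{W}^T v)^2] = \sigma^2\|v\|^2$ yields
\[
\mathbb{E}\bigl[\mathrm{Var}[h_{\Theta}(U) \mid \Theta]\bigr] \le \tfrac{\sigma^2}{2}\,\mathbb{E}\bigl[\|x(U;\Theta_-) - x(U';\Theta_-)\|^2\bigr].
\]

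\textbf{Step 2: rewriting in terms of cosine similarity and lower bounding the total variance.} Under the mild norm-preservation property $\|x(U;\Theta_-)\| \approx 1$ that follows from $\|U\|=1$ together with \citep[Lemma 7.1]{allen2019convergence}, the right-hand side becomes approximately $\sigma^2(1-\bar c)$, where $\bar c := \mathbb{E}[\cos(x(U;\Theta_-), x(U';\Theta_-))]$ is the average cosine similarity of previous layer outputs. For the denominator, I would compute $\mathrm{Var}[h_{\Theta}(U)]$ directly via Gaussian moments of $\phi(Z)$ with $Z = \mathbf{W}^T x + b$ (conditionally centered Gaussian with variance on the order of $\sigma^2$), giving $\mathrm{Var}[h_{\Theta}(U)] \ge C\sigma^2$ with an explicit $C \in (0,1)$ (e.g.\ $C = 1-1/\pi$ in the normalized regime). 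Combining,
\[
\frac{\mathrm{Var}[\mathbb{E}[h_{\Theta}(U) \mid \Theta]]}{\mathrm{Var}[h_{\Theta}(U)]} = 1 - \frac{\mathbb{E}[\mathrm{Var}[h_{\Theta}(U) \mid \Theta]]}{\mathrm{Var}[h_{\Theta}(U)]} \ge 1 - \frac{1-\bar c}{C} =: \beta,
\]
which is monotone increasing in the average previous-layer cosine similarity $\bar c$, matching the informal statement.

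\textbf{Main obstacle.} The hardest part will be making the norm-preservation rigorous without degrading the constant in $\beta$: $\|x(U;\Theta_-)\|$ must be controlled uniformly over $U$ and $\Theta_-$, and the natural inductive argument combining concentration of sub-Gaussian quadratic forms with depth can produce multiplicative error terms that need to be absorbed carefully into the mild assumptions referenced by the informal statement. A secondary issue is that the $1$-Lipschitz bound on $\phi$ is loose on the event that both pre-activations are negative (where the squared difference vanishes); a sharper $\beta$ could be obtained by replacing the Lipschitz step with an exact evaluation of $\mathbb{E}[(\phi(Z_1)-\phi(Z_2))^2]$ for jointly Gaussian $(Z_1,Z_2)$, though the qualitative conclusion already follows from the cruder bound above.
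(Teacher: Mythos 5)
Your overall architecture matches the paper's: start from the law of total variance, upper-bound the within-initialization term by the expected inter-sample dispersion of the previous layer, lower-bound the total variance via rectified-Gaussian moments, and translate the dispersion into a cosine-similarity quantity via approximate norm preservation. The difference lies in how the within-initialization variance $\mathbb{E}[\mathrm{Var}[h_{\Theta}(U)\mid\Theta]]$ is upper-bounded, and it is not merely cosmetic.

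You invoke the $1$-Lipschitzness of $\phi$ to get
\[
\mathbb{E}\bigl[\mathrm{Var}[h_{\Theta}(U)\mid\Theta]\bigr] \le \tfrac{\sigma^2}{2}\,\mathbb{E}\bigl[\|x(U) - x(U')\|^2\bigr] = \sigma^2\,\mathbb{E}\bigl[\mathrm{tr}\,\mathrm{Var}[x(U)\mid\Theta_-]\bigr].
\]
The paper instead exploits the exact ReLU identity $\phi(z)^2 + \phi(-z)^2 = z^2$ together with the sign-symmetry of the Gaussian weights ($\mathbf{W}\stackrel{d}{=}-\mathbf{W}$, $\mathbf{b}\stackrel{d}{=}-\mathbf{b}$): averaging the conditional variance of $\phi(\mathbf{W}x+\mathbf{b})_k$ with that of $\phi(-(\mathbf{W}x+\mathbf{b}))_k$ and then bounding the sum by the variance of the pre-activation gives
\[
\mathbb{E}\bigl[\mathrm{Var}[(h^{(L)}(U))_k \mid \Theta^{(L)}]\bigr] < \tfrac{1}{2}\,\mathrm{tr}\bigl(\mathrm{Var}[h^{(L-1)}(U)\mid\Theta^{(L-1)}]\bigr)/d^{(L)},
\]
which is exactly half of your Lipschitz bound. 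That factor of $\tfrac{1}{2}$ is what makes the final arithmetic close: the denominator bound produced by the rectified-Gaussian variance $\tfrac{\pi-1}{2\pi}$ (evaluated at pre-activation variance $2/d^{(L)}$) gives a ratio $\tfrac{\pi}{2(\pi-1)}(1-\beta)$, and $\tfrac{\pi}{2(\pi-1)}<1$, so the bound $1-\beta$ holds with $\beta$ defined by $\mathrm{tr}(\mathrm{Var}[h^{(L-1)}(U)\mid\Theta^{(L-1)}]) = 1-\beta$. With your Lipschitz bound the prefactor becomes $\tfrac{\pi}{\pi-1}>1$ and the conclusion no longer holds for the same $\beta$; you must redefine $\beta$ downward (as you do, setting $\beta := 1-(1-\bar c)/C$), which weakens the claim and no longer matches the paper's trace-variance characterization. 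You already flagged the Lipschitz step as loose and suggested replacing it with an exact bivariate-Gaussian computation; the paper's ReLU symmetrization identity is exactly the clean way to do that without integrating, and recovers the tight factor of two. So the gap is a missing key lemma rather than a wrong route: with the symmetrization step in place, your argument becomes the paper's.

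One smaller remark: the paper's detailed statement takes $\Theta^{(L-1)}$ as fixed and the scalar $\mathrm{tr}(\mathrm{Var}[h^{(L-1)}(U)\mid\Theta^{(L-1)}]) = 1-\beta$ as a deterministic hypothesis, sidestepping the issue of taking expectations over $\Theta_-$ in your dispersion term and the norm-preservation uniformity you identify as the main obstacle; the paper handles norm preservation with the exact identity $\mathbb{E}[h^{(L-1)}(U)^T h^{(L-1)}(U)]=1$ when $\|U\|=1$, which avoids the concentration machinery entirely.
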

Theorem~\ref{thm:fixed_weight_variance} shows that the ratio of the variance due to different random initializations to the total variance is bounded below by the average cosine similarity of previous layer outputs. As Figure~\ref{fig:cone-cos} (b) illustrated, the average cosine similarity of an intermediate layer output often approaches to one as the layer gets deeper. Accordingly, the lower bound $\beta$, which captures the average cosine similarity, is close to one when a neural network is deep enough. 
In Appendix~\ref{appendix:sec:proofs}, we elaborate on the relationship between $\beta$ and the cosine similarity, providing a detailed statement of the Theorem.

\begin{figure*}[ht]
    \includegraphics[width=0.95\textwidth]{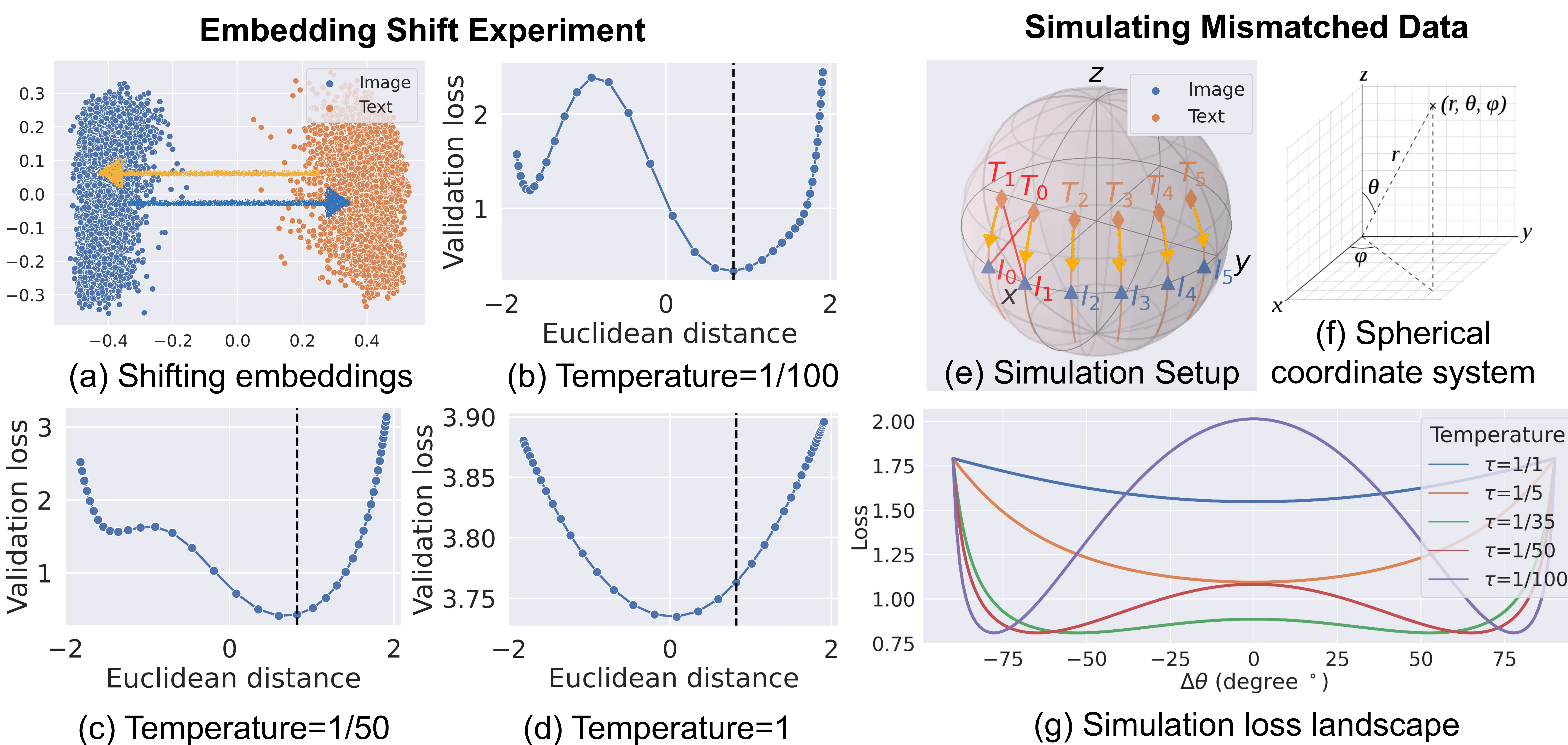}
    \caption{
    \small
    \textbf{Contrastive learning preserves modality gap.}
    \textbf{(a) Embedding shift experiment.} To probe the loss landscape of CLIP, we manually shift the image embeddings and text embeddings towards closing the gap. 
    \textbf{(b-d) The loss landscapes under different temperatures.} 
    Y axis indicates the contrastive loss. 
    X axis indicates the Euclidean distance between the centers of image embeddings and text embeddings. 
    The vertical dash line $x=0.82$ indicates CLIP's original distance between image and text embeddings (i.e., without any shifting). 
    Note that in CLIP, the image embeddings and text embeddings are L2-normalized (Supplementary Figure~\ref{fig:clip-loss-pseudocode}). In other words, the image and text embeddings of CLIP are always on the unit sphere. 
    \textbf{(e-g) Simulation analysis for the loss landscape.} Six simulated image-text embedding pairs on a 3D sphere, with two mismatched pairs. Text embeddings are shifted towards closing the modality gap (i.e., modifying $\theta$). 
    }
    \label{fig:optimization}
\end{figure*}

\section{Contrastive learning preserves modality gap}

\subsection{Background: Contrastive Loss}
Given that the modality gap is present at initialization, we investigate why our optimization procedure fails to close the gap. 
We begin by reviewing contrastive learning, which is a commonly used training strategy for multi-modal models~\citep{ConVIRT,VideoCLIP,ALBEF}. 
We illustrate with CLIP due to its wide usage.

Given a batch of $N$ (image, text) pairs, CLIP learns to predict which of the $N\times N$ possible (image, text) pairs are aligned. 
In other words, CLIP learns to maximize the cosine similarity of the image and text embeddings of the $N$ real pairs in the batch while minimizing the cosine similarity of the embeddings of the $N^2-N$ incorrect pairs. 
Formally, the optimization objective is the average of two losses: one for image-to-text classification:
\vspace{-0.2cm}
\begin{equation*}
\small
    \mathcal{L}_{\mathcal{I} \rightarrow \mathcal{T}} = -\frac{1}{N} \sum_{i=1}^N \log \frac{\exp(\mathbf{x}_i \cdot \mathbf{y}_i / \tau)}{\sum_{j=1}^N \exp(\mathbf{x}_i \cdot \mathbf{y}_j  / \tau)} 
\end{equation*}
\vspace{-0.2cm}
and the other for text-to-image classification:
\begin{equation*}
\small
    \mathcal{L}_{\mathcal{T} \rightarrow \mathcal{I}} = -\frac{1}{N} \sum_{i=1}^N \log \frac{\exp(\mathbf{x}_i \cdot \mathbf{y}_i / \tau)}{\sum_{j=1}^N \exp(\mathbf{x}_j \cdot \mathbf{y}_i  / \tau)} 
\end{equation*}
\vspace{-0.4cm}

Here, $\mathbf{x}_i$ and $\mathbf{y}_j$ are the L2-normalized embedding of image in the $i$-th pair and that of text in the $j$-th pair, respectively. $\tau$ is a learned temperature parameter to scale the logits. The final learned temperature is $\tau=\frac{1}{100}$ in CLIP. See additional illustration in Figure~\ref{fig:pull}(a) and Supp. Figure~\ref{fig:clip-loss-pseudocode}.

\subsection{Embedding Shift Experiment}
\label{subsec:embedding-shifting-experiment}

\paragraph{Design}
We hypothesize that the contrastive learning objective encourages the existence of the modality gap. 
To testify this hypothesis, we design a loss landscape probing experiment on $n=5,000$ image-caption pairs\footnote{Here we evaluated CLIP with batch size $50$.} from the validation set of MSCOCO Caption dataset. 
We first define the modality gap as the difference between the center of image embeddings and text embeddings:
\vspace{-0.2cm}
\begin{equation*}
\small
    \vec{\Delta}_\text{gap} = \frac{1}{n}\sum_{i=1}^n \textbf{x}_i 
    - \frac{1}{n}\sum_{i=1}^n \textbf{y}_i 
\end{equation*}
where $\textbf{x}_i$ and $\textbf{y}_i$ are the L2-normalized image embedding and text embedding. 
We then manually shift every text embedding and image embedding towards closing the modality gap (Figure~\ref{fig:optimization} (a)). After shifting, we re-normalize each embedding to the unit hypersphere: 
\vspace{-0.1cm}
\begin{align*}
\small
    \textbf{x}_i^\text{shift} = \text{Normalize}(\textbf{x}_i - \lambda \vec{\Delta}_\text{gap}), \quad
    \textbf{y}_i^\text{shift} = \text{Normalize}(\textbf{y}_i + \lambda \vec{\Delta}_\text{gap}). 
\end{align*}

We vary the scalar $\lambda$ to produce different amounts of shifts. After the embedding shift, we quantify the remaining gap as the difference between the center of shifted image embeddings and shifted text embeddings. The gap distance before shifting is $\| \vec{\Delta}_\text{gap} \|=0.82$. 
Here Euclidean distance is a intuitive metric because in CLIP, the image embeddings and text embeddings are L2-normalized (Supplementary Figure~\ref{fig:clip-loss-pseudocode}). In other words, the image and text embeddings of CLIP are always on the unit sphere. 
Specifically, for any $n$-dimensional vectors $x$ and $y$, the cosine similarity is given as $\cos(x,y)=x^T y$, and the Euclidean distance is given as $(x-y)^T (x-y) = 2(1-x^T y)$. Therefore, they have a functional relationship as $\mathrm{Euclidean distance}(x,y)=2(1-\cos(x,y))$. 
When the angle between $x$ and $y$ is less than $\pi/2$, which is the case as embeddings are in a narrow cone, the small Euclidean distance directly means a high cosine similarity.

\paragraph{Results} 
Figure~\ref{fig:optimization}(b) shows the contrastive loss landscape on different amount of modality gap under temperature $\tau=\frac{1}{100}$ (i.e., CLIP's learned final temperature). We found that the default gap distance $\| \vec{\Delta}_\text{gap} \|=0.82$ actually achieves the global minimum, and shifting toward closing the gap  \emph{increases} the contrastive loss. 
Interestingly, there is a local minimum when we shift the text embeddings to the opposite side in a ``back-to-back position.''
Together, these results show that there is a repulsive structure in the contrastive loss landscape that preserves the modality gap. However, when the temperature increases (Figure~\ref{fig:optimization}(c,d)), the repulsive structure and the local minimum gradually disappear, and closing the gap becomes more optimal. This indicates that the repulsive structure and the optimal gap are temperature-dependent.

\paragraph{Additional Evidence from Fine-tuning} 
To further investigate the impact of temperature on modality gap, we fine-tune CLIP under 6 different temperatures 
$\tau \in \{\frac{1}{100}, \frac{1}{50}, \frac{1}{30}, \frac{1}{20}, \frac{1}{10}, 1\}$ 
respectively, on MSCOCO Caption training set with batch size 64. 
We found that a high temperature ($\tau \in \{ \frac{1}{10}, 1\}$) in fine-tuning significantly reduces or closes the gap, while a low temperature does not. The gap distance $\| \vec{\Delta}_\text{gap} \|$ decreases monotonically with increasing temperature (Supp. Figure~\ref{fig:downstream2}).

\subsection{Simulating mismatched data}
\label{subsec:simulating-mismatched-data}

\paragraph{Design} 
We designed a simple simulation to distill the empirical phenomena in the embedding shift experiment. We consider six simulated image-text embedding pairs on a 3D unit sphere (Figure~\ref{fig:optimization} (e)), with two \emph{mismatched} image-text pairs $(I_0, T_0), (I_1, T_1)$. 
Here "mismatched" means correct pairs are $(I_0, T_0)$ and $(I_1, T_1)$ but $I_0$ is closer to $T_1$ and $I_1$ is closer to $T_0$. 
We fix the image embeddings while shifting the text embeddings downwards to close the gap (i.e., modifying $\theta$, see more details in  Appendix~\ref{appendix:sec:optimization}).

\paragraph{Results} 
With mismatched data, our simulation model successfully reproduces the temperature-dependent repulsive structure in the optimization landscape.  
When we remove the mismatch, the repulsive structure disappears (Supp. Figure~\ref{fig:optimization-additional-simulation}). 
This indicates that the presence of \emph{mismatched} data is an important forming factor of modality gap under low temperatures. 
Although the mismatch here is simulated, 
in practice mismatched data are common (e.g., hard-to-differentiate images/captions or annotation errors). 
Investigating how and to what extent the multimodal data misalignment could affect the contrastive loss landscape and thereby the modality gap is an interesting direction for future research.

\subsection{Initialization vs Optimization}

\paragraph{Design} 
So far, we have shown that (1) modality gap is born at random initialization, and (2) contrastive learning objective encourages the gap. 
To explore how the final modality gap is affected by a combination of both factors, we train two CLIP models from scratch: one model uses random initialization, where the gap is large $\|\vec \Delta_\text{gap}\| = 1.1891 \pm 0.0017$ because of the cone effect discuss in Sec.~\ref{sec:cone}; another model amends the gap at the initialization by transforming text embeddings to be close to the image embeddings, where the gap is almost zero $\|\vec \Delta_\text{gap}\| = 0.0388 \pm 0.0351$. Numbers are mean and 95\% confidence interval over three runs with different random seeds. The transformation we applied is a common method to align multilingual word embeddings~\citep{lample2018word}. More specifically, given image embedding $\textbf{x}$ and text embedding $\textbf{y}$, we apply an orthogonal matrix to text embedding $\textbf{y}' = W\textbf{y}$ and compute the multi-modal contrastive loss on $\textbf{x}$ and $\textbf{y}'$. The orthogonal matrix minimizes the distance between image embeddings and transformed text embeddings: $W=\arg\min_{W\in O_D} \|X-YW\|$ where $X, Y\in \mathbb{R}^{N\times D}$ are image embeddings and text embeddings generated from $N$ image-caption pairs, and $O_D$ is the set of $D$-dimensional orthogonal matrix.

\paragraph{Results} We train both models on the MSCOCO Caption training set with batch size 64 and temperature $\tau=\frac{1}{100}$ (i.e., CLIP's learned temperature). After training, the original model gap changes from $1.1891 \pm 0.0017$ to $1.2991 \pm 0.0389$, while the amended model gap changes from $0.0388 \pm 0.0351$ to $0.7457 \pm 0.0633$. Numbers are 95\% confidence interval over three runs with different random seeds. 
We clearly observe the same domain gap phenomenon as shown in Figure~\ref{fig:pull} using PCA or UMAP. This experiment shows that the final domain gap is caused by both initialization and optimization. When we ablate the domain gap at the initialization, the loss will still encourage the gap, but the gap distance is only 57\% compared to the model without amending the gap.

\section{Modality Gap Implications}
\label{sec:implications}

\begin{table}[t]
\centering
\small
\begin{minipage}{2.6in}
    \centering
    \resizebox{\linewidth}{!}{
    \small
    \tabcolsep=0.16cm
    \begin{tabular}{r cc c}
        \toprule
        \textbf{Dataset} & \textbf{Original gap} & \textbf{Modified gap} & \textbf{Direction}  \\
        \midrule
        \multicolumn{4}{c}{\textbf{Coarse-grained Classification}} \\
        CIFAR10 & 0.9013 & \textbf{0.9081} & $\uparrow$\\
        CIFAR100 & 0.6658 & \textbf{0.6737} & $\downarrow$ \\
        \midrule
        \multicolumn{4}{c}{\textbf{Fine-grained Classification}} \\
        EuroSAT & 0.5410 & \textbf{0.5645} & $\downarrow$ \\
        \midrule
        \multicolumn{4}{c}{\textbf{Optical Character Recognition}} \\
        SVHN & 0.5389 & \textbf{0.5396} & $\uparrow$ \\
        HatefulMemes & 0.5800 & \textbf{0.5811} & $\uparrow$ \\
        \bottomrule
    \end{tabular}
    }
    \caption{
    \small
    \textbf{Modifying the modality gap can improve zero-shot performances for downstream tasks.} 
    Number indicates top-1 accuracy. 
    Direction indicates that whether increasing ($\uparrow$) or decreasing ($\downarrow$) the gap leads to optimal performance.
    }
    \label{tab:improve-performance}
\end{minipage}
\hfill
\begin{minipage}{2.8in}
    \centering
  \resizebox{\linewidth}{!}{
  \setlength{\tabcolsep}{0.03cm}
\begin{tabular}{r ccc ccc}
\cmidrule[\heavyrulewidth]{1-7}
\multirow{2}{*}{\bf Denigration Biases} 
& \multicolumn{3}{c}{\bf Original gap}  
& \multicolumn{3}{c}{\bf Modified gap} 
\\
\cmidrule(lr){2-4}
\cmidrule(lr){5-7}
& \small \begin{tabular}[c]{@{}c@{}}Crime\\ related\end{tabular} 
& \small \begin{tabular}[c]{@{}c@{}}Non\\ human\end{tabular}
& \bf Sum    
& \small \begin{tabular}[c]{@{}c@{}}Crime\\ related\end{tabular} 
& \small \begin{tabular}[c]{@{}c@{}}Non\\ human\end{tabular}
& \bf Sum    
\\
\cmidrule{1-7}
Black           & 1.0\%                    & 0.1\%                & \cellcolor{red!22} \textbf{1.1\%}  & 0.8\%                    & 0.1\%                & \cellcolor{green!10} \textbf{1.0\%}  \\
White           & 15.5\%                   & 0.2\%                & \cellcolor{red!100} \textbf{15.7\%} & 13.2\%                   & 0.4\%                & \cellcolor{green!60} \textbf{13.7\%} \\
Indian          & 1.2\%                    & 0.0\%                & \cellcolor{red!24} \textbf{1.2\%}  & 1.1\%                    & 0.0\%                & \cellcolor{green!15} \textbf{1.1\%}  \\
Latino          & 2.8\%                    & 0.1\%                & \cellcolor{red!50} \textbf{2.8\%}  & 1.9\%                    & 0.1\%                & \cellcolor{green!24} \textbf{2.0\%}  \\
Middle Eastern  & 6.3\%                    & 0.0\%                & \cellcolor{red!70} \textbf{6.3\%}  & 5.2\%                    & 0.0\%                & \cellcolor{green!33} \textbf{5.2\%}  \\
Southeast Asian & 0.5\%                    & 0.0\%                & \cellcolor{red!10} \textbf{0.5\%}  & 0.3\%                    & 0.0\%                & \cellcolor{green!6} \textbf{0.3\%}  \\
East Asian      & 0.7\%                    & 0.0\%                & \cellcolor{red!12} \textbf{0.7\%}  & 0.6\%                    & 0.0\%                & \cellcolor{green!10} \textbf{0.6\%}  \\
\cmidrule[\heavyrulewidth]{1-7}
\end{tabular}
}
    \caption{
    \small
    \textbf{Modifying the modality gap reduces biases for all races.} 
    Number indicates the fraction FairFace images whose top-1 prediction is offensive. 
    Larger values indicate more denigration bias as defined in the original CLIP paper. 
    Increasing the gap from 
    $0.82$ to $0.97$ reduces denigration harms consistently for all races. 
    }
    \label{tab:reduce-bias}
\end{minipage}
\end{table}

\subsection{Zero-shot performance}
\label{subsec:zero-shot-performance}

\paragraph{Design} One of the most interesting capabilities for CLIP is its strong zero-shot transferability to a variety of downstream tasks without any supervision. We study whether changing the gap will affect CLIP (ViT-B/16)'s performances on various downstream tasks, 
including coarse-grained classification (CIFAR10 and CIFAR100), fine-grained classification (EuroSAT~\citep{helber2019eurosat}), and optical character recognition (SVHN, HatefulMemes~\citep{kiela2020hateful}). 
Metric and prompt for each task are shown in Supp. Table~\ref{tab:metric-and-prompt}. Here we use the simple method to change the gap by shifting the embeddings introduced in Sec~\ref{subsec:embedding-shifting-experiment}. 
The main objective of our paper is to understand the modality gap phenomenon, a general inductive bias that holds across various data modalities and NN architectures. The goal of our paper is \emph{not} to propose a method to close the gap and to improve downstream performance. 

\paragraph{Results} 
Modifying the gap by shifting the embeddings can improve different downstream tasks compared to the original gap without shifting embeddings (Table~\ref{tab:improve-performance}). Details of performance vs gap distance curves are shown in Supp. Figure~\ref{fig:downstream}. We leave more methods to change the gap and more analysis of the relation between gap distance and downstream task performance to future work.

\subsection{Fairness}
\label{subsec:fairness-implications}

\paragraph{Design} 
We follow the bias evaluation setup in the CLIP paper to evaluate denigration harms~\citep[Sec.~7.1]{clip}. 
We performed zero-shot evaluations on CLIP (ViT-B/32) on the evaluation set of the FairFace dataset~\citep{FairFace}, which has 10,954 images. 
In addition to the 14 FairFace classes (e.g., ‘white male’, ‘black female’), we added 4 non-human classes (‘animal’, ‘gorilla’, ‘chimpanzee’ and ‘orangutan’) and 3 crime-related classes (‘thief’, ‘criminal’ and ‘suspicious person’). The text prompts are attached in Appendix (Supp. Figure~\ref{fig:bias-prompts}). 
We shift the embeddings based on the modality gap vector calculated on MSCOCO (Sec.~\ref{subsec:embedding-shifting-experiment}). 
We report the fraction FairFace images whose top-1 prediction is offensive. 

\paragraph{Results} 
We found that increasing the gap from $0.82$ to $0.97$ \emph{reduces} denigration harms consistently for \emph{all} races (Table~\ref{tab:reduce-bias}). 
Meanwhile, we only observe a minor $0.0008$ top-1 accuracy drop (Appendix~\ref{appendix:subset:reduce-bias}). 
It is encouraging that a simple gap offsetting approach can lead to a consistent bias reduction across all races on such a complex model (i.e., CLIP)\footnote{
\cite{clip} evaluated a private version of CLIP, and thus their numbers deviate from ours. This is a known issue in the community: 
\url{https://github.com/openai/CLIP/issues/157}
}. 
Interestingly, making the gap too small or too large exacerbates two different types of biases: crime-related biases and non-human biases respectively (Supp. Table~\ref{tab:appendix-reduce-bias}).

\section{Related Work}

\paragraph{Contrastive Representation Learning} 
Contrastive representation learning learns an embedding space where similar objects are closer than dissimilar ones, 
and has achieved great success in vision~\citep{SimCLR, BYOL, SWAV, SimSiam}, language~\citep{SentenceBERT, DBLP:conf/emnlp/GaoYC21}, and graph~\citep{you2020graph,qiu2020gcc}. 
However, as contrastive learning is still an emerging representation learning technique, we still lack comprehensive theoretical and empirical understandings about why contrastive learning works. 
\cite{DBLP:conf/icml/0001I20} proposed two ideal objectives for contrastive representation space: alignment (similar samples have similar features) and uniformity (features are uniformly distributed on the hypersphere), and demonstrated these two objectives are highly correlated with downstream task performances. 
\cite{DBLP:conf/cvpr/WangL21a} show that low temperatures increase the model's penalty on hard negative examples, and thus increase uniformity and decrease tolerance (the closeness of semantically similar samples). 
These analyses mostly focus on unsupervised contrastive learning on a single modality. 
Orthogonal to their work, we show that multi-modal contrastive learning with low temperatures and mismatched data encourages the modality gap.

\paragraph{Multi-modal Contrastive Representation Learning}
Multi-modal models map inputs from different data modalities (e.g. image and text) into a shared representation space~\citep{ConVIRT,VideoCLIP,ALBEF,ALIGN,CLASP}. It has garnered tremendous interest and excitement as a framework for data integration. These models are often pre-trained with contrastive loss~\citep{InfoNCE-loss}, as \cite{clip} showed that the contrastive learning is $12\times$ more efficient than the generative approaches. 
We demonstrate an intriguing geometric phenomenon of the representation space of these multi-modal models, and provide a three-part explanation supported by theory and experiments. The idea of mapping images and text into a shared embedding space has been explored in earlier works~\citep{socher2010connecting,weston2010large}. There have been recent efforts in formulating images and text embeddings as metric learning~\citep{DBLP:conf/nips/FromeCSBDRM13}, multilabel classification~\citep{DBLP:conf/eccv/JoulinMJV16}, n-gram language learning~\citep{DBLP:conf/iccv/LiJJM17}, and captioning~\citep{DBLP:conf/cvpr/Desai021}. 
Recently there has there has also been work in using a unified encoder to fuse different data modalities~\cite{DBLP:journals/corr/abs-2201-08377}. 
Research into how the modality gap phenomenon generalizes to the multi-modal representations obtained by these alternative methods, or even uni-modal settings with teacher and student model~\cite{DBLP:conf/nips/TarvainenV17, beyer2022knowledge} would be a promising direction for future work.

\paragraph{Cone Effect} 
Our analyses also provide new insights on the cone effect, which we show is a general phenomenon for deep neural networks. 
Existing work focuses on the language representations of \emph{trained} language models such as BERT and GPT-2~\citep{DBLP:conf/emnlp/Ethayarajh19,RepresentationDegeneration,DBLP:conf/emnlp/LiZHWYL20}. 
Given that isotropy has both theoretical and empirical benefits for static embeddings~\citep{DBLP:conf/iclr/MuV18}, the extent of anisotropy in contextualized representations is surprising~\citep{DBLP:conf/emnlp/Ethayarajh19}. 
It has been shown that the cone effect limits the expressiveness of the language representations. 
Post-processing methods~\citep{DBLP:conf/emnlp/LiZHWYL20,DBLP:journals/corr/abs-2103-15316,DBLP:conf/iclr/AroraLM17,DBLP:conf/iclr/MuV18} or modified training objective~\citep{RepresentationDegeneration,Wang2020Improving,DBLP:conf/emnlp/GaoYC21} alleviate the cone effect and improve downstream performance. 
Existing work attributes the cone effect to the \emph{optimization} under unbalanced word frequencies distribution~\citep{RepresentationDegeneration,DBLP:conf/emnlp/LiZHWYL20}. 
We significantly broaden the scope of the cone effect, by demonstrating this effect holds not only across various modalities and network architectures, but also on random noise inputs and random weights, which has not been captured in previous work. 
We mathematically characterize the contraction mapping induced by linear layers with ReLU non-linearities to explain the cone effect. Our theory matches well with experiments and provides insights for understanding the general inductive biases of deep neural networks. 

\section{Discussion}

In this work, we investigated an interesting phenomenon in multi-modal contrastive learning --- \emph{modality gap}. We analyzed why the gap exists, i.e., the joint effect of model initialization and optimization, and why studying the gap is important, i.e., it can affect the downstream task performance and fairness. 
Our work raises several basic questions about representation learning, contrastive learning, and multi-modal contrastive representation learning.  
For representation learning, prior research in NLP has shown that alleviating the cone effect improves downstream performance. 
As our work significantly broadens the scope of the cone effect, methods for alleviating the cone effect in other modalities to improve ML performance is an interesting direction of future research.

For contrastive learning, our embedding shifting, simulation, and fine-tuning experiments all show that the contrast loss landscape is heavily influenced by temperature. 
Recent work has found that temperature directly controls the uniformity and affinity of the uni-modal representation space~\citep{DBLP:conf/cvpr/WangL21a}. Our study provides a complementary understanding of the multi-modal representation space. 
Development of geometric methods for evaluation of representations~\cite{poklukar2022delaunay,kynkaanniemi2019improved} to further capture the geometric landscape of the modality gap is an interesting direction of future work.

For multi-modal contrastive representational learning, we find that changing the modal gap can affect performance and fairness on downstream tasks. Interestingly, having \emph{larger gap} can help some fairness and zero-shot learning applications. 
The main objective of our paper is to demonstrate the modality gap phenomenon and explain contraction mapping contribute to this. Systematic analysis of the impact of the gap on applications is an important direction of future work.

\section*{Reproducibility Statement}
We provide open-source implementation of our work at 
\url{https://github.com/Weixin-Liang/Modality-Gap}. 
The implementations will enable researchers to reproduce the modality gap described here as well as run their own analyses on additional cross-modal models. 
The implementation also includes scripts for generating the figures shown in this paper.

\clearpage
\newpage 
\bibliographystyle{abbrv}
\bibliography{main}

\clearpage
\newpage 

\section*{Checklist}

\begin{enumerate}

\item For all authors...
\begin{enumerate}
  \item Do the main claims made in the abstract and introduction accurately reflect the paper's contributions and scope?
    \answerYes{} %
  \item Did you describe the limitations of your work?
    \answerYes{}
  \item Did you discuss any potential negative societal impacts of your work?
    \answerYes{}
  \item Have you read the ethics review guidelines and ensured that your paper conforms to them?
    \answerYes{}
\end{enumerate}

\item If you are including theoretical results...
\begin{enumerate}
  \item Did you state the full set of assumptions of all theoretical results?
    \answerYes{}
        \item Did you include complete proofs of all theoretical results?
    \answerYes{}
\end{enumerate}

\item If you ran experiments...
\begin{enumerate}
  \item Did you include the code, data, and instructions needed to reproduce the main experimental results (either in the supplemental material or as a URL)?
    \answerYes{}
  \item Did you specify all the training details (e.g., data splits, hyperparameters, how they were chosen)?
    \answerYes{}
        \item Did you report error bars (e.g., with respect to the random seed after running experiments multiple times)?
    \answerYes{}
        \item Did you include the total amount of compute and the type of resources used (e.g., type of GPUs, internal cluster, or cloud provider)?
    \answerYes{}
\end{enumerate}

\item If you are using existing assets (e.g., code, data, models) or curating/releasing new assets...
\begin{enumerate}
  \item If your work uses existing assets, did you cite the creators?
    \answerYes{}
  \item Did you mention the license of the assets?
    \answerYes{}
  \item Did you include any new assets either in the supplemental material or as a URL?
    \answerYes{}
  \item Did you discuss whether and how consent was obtained from people whose data you're using/curating?
    \answerYes{}
  \item Did you discuss whether the data you are using/curating contains personally identifiable information or offensive content?
    \answerYes{}
\end{enumerate}

\item If you used crowdsourcing or conducted research with human subjects...
\begin{enumerate}
  \item Did you include the full text of instructions given to participants and screenshots, if applicable?
    \answerNA{}
  \item Did you describe any potential participant risks, with links to Institutional Review Board (IRB) approvals, if applicable?
    \answerNA{}
  \item Did you include the estimated hourly wage paid to participants and the total amount spent on participant compensation?
    \answerNA{}
\end{enumerate}

\end{enumerate}

\appendix
\clearpage
\newpage

\section{Contrastive learning preserves modality gap}
\label{appendix:sec:optimization}

\subsection{Simulating Mismatched Data}

In Sec.~\ref{subsec:simulating-mismatched-data}, 
we designed a simple simulation to distill the empirical phenomena in the embedding shift experiment. We found that with mismatched data, our simulation model successfully reproduces the temperature-dependent repulsive structure in the optimization landscape (Figure~\ref{fig:optimization} (e-g)). Here we present another simulation where we remove the mismatch (Supp. Figure~\ref{fig:optimization-additional-simulation}). 
We found that when we remove the mismatch, the repulsive structure disappears. This indicates that the presence of \emph{mismatched} data is an important forming factor of modality gap under low temperatures.

For both Figure~\ref{fig:optimization} (e-g) and Supp. Figure~\ref{fig:optimization-additional-simulation}, all embeddings are on the 3D unit sphere (i.e., $r=1$). 
The spacing between adjacent image-text pairs is $\Delta \phi=15^{\circ}$. All image vectors are fixed, and located on the equator (i.e., $\theta=90^{\circ}$). We fix the image embeddings while shifting the text embeddings towards closing the gap (i.e., modifying $\theta$). 
Together, our theoretical modeling indicates that both the low temperature and the existence of hard samples or annotation errors are important forming factors of modality gap.

\section{Modality Gap Implications}
\label{appendix:sec:implications}

\subsection{Zero-shot Performance}
\label{appendix:subset:improve-performance}

In Sec.~\ref{subsec:zero-shot-performance}, we demonstrated that increasing the modality gap in CLIP can improve its downstream performance on several zero-shot learning tasks.  
The downstream tasks we evaluated include coarse-grained classification (CIFAR10 and CIFAR100), fine-grained classification (EuroSAT~\citep{helber2019eurosat}), and optical character recognition (SVHN, HatefulMemes~\citep{kiela2020hateful}). 
Metric and prompt for each task are shown in Appendix Table~\ref{tab:metric-and-prompt}. Details of performance vs gap distance curve are shown in Appendix Figure~\ref{fig:downstream}. A modality gap vector is calculated for each task following the methods in Sec~\ref{subsec:embedding-shifting-experiment}.

\subsection{Fairness}
\label{appendix:subset:reduce-bias}
In Sec.~\ref{subsec:fairness-implications}, we showed an encouraging result that a simple gap offsetting approach can lead to a consistent bias reduction for CLIP across all races. 
Meanwhile, we only observe a minor $0.0008$ top-1 accuracy drop, from $0.5817$ to $0.5739$. We show text prompts we used in Supp. Figure~\ref{fig:bias-prompts}. Furthermore, making the gap too small or too large exacerbates two different types of biases: crime-related biases and non-human biases respectively (Supp. Table~\ref{tab:appendix-reduce-bias}).
Making the gap too small ($d=0.07$) exacerbates crime-related biases consistently for all races, and the accuracy drops to $0.5599$.  
Making the gap too large ($d=1.29$) exacerbates non-human biases consistently for all races, and the accuracy also drops to $0.4083$.

\section{The bigger picture: Why studying the modality gap is important}
There has been tremendous recent interest and excitement in studying the inductive bias of neural networks mathematically and empirically~\cite{Lottery-Ticket-Hypothesis}. 
For example, an influential line of research shows that neural networks can easily fit random labels~\cite{fit-random-label}, and SGD provides an inductive bias of “implicit regularization” by favoring minima that is flatter~\cite{SGD-flat-minima} and closer to the initialization~\cite{closeness-to-initialization}. Another impactful line of research shows that neural networks trained on natural scenes are biased towards texture~\cite{imagenet-trained-CNNs-texture-bias}, and exhibit gestalt closure similar to human perception, which is an inductive bias long-studied in the Psychology literature~\cite{natural-scenes-trained-gestalt-closure}. 
Researchers have also shown that neural networks favor “shortcut learning”, which may be a common characteristic of learning systems, biological and artificial alike, as known in Comparative Psychology, Education and Linguistics~\cite{shortcut-learning,prioritize-learning-simple-pattern-first}. 
Our paper is positioned to be part of this broad and exciting trend of studying the inductive bias of neural networks by analyzing the modality gap phenomenon which occurs consistently in multi-modal contrastive representation learning.

\clearpage

\begin{figure*}[tb]
    \centering
    \includegraphics[width=\textwidth]{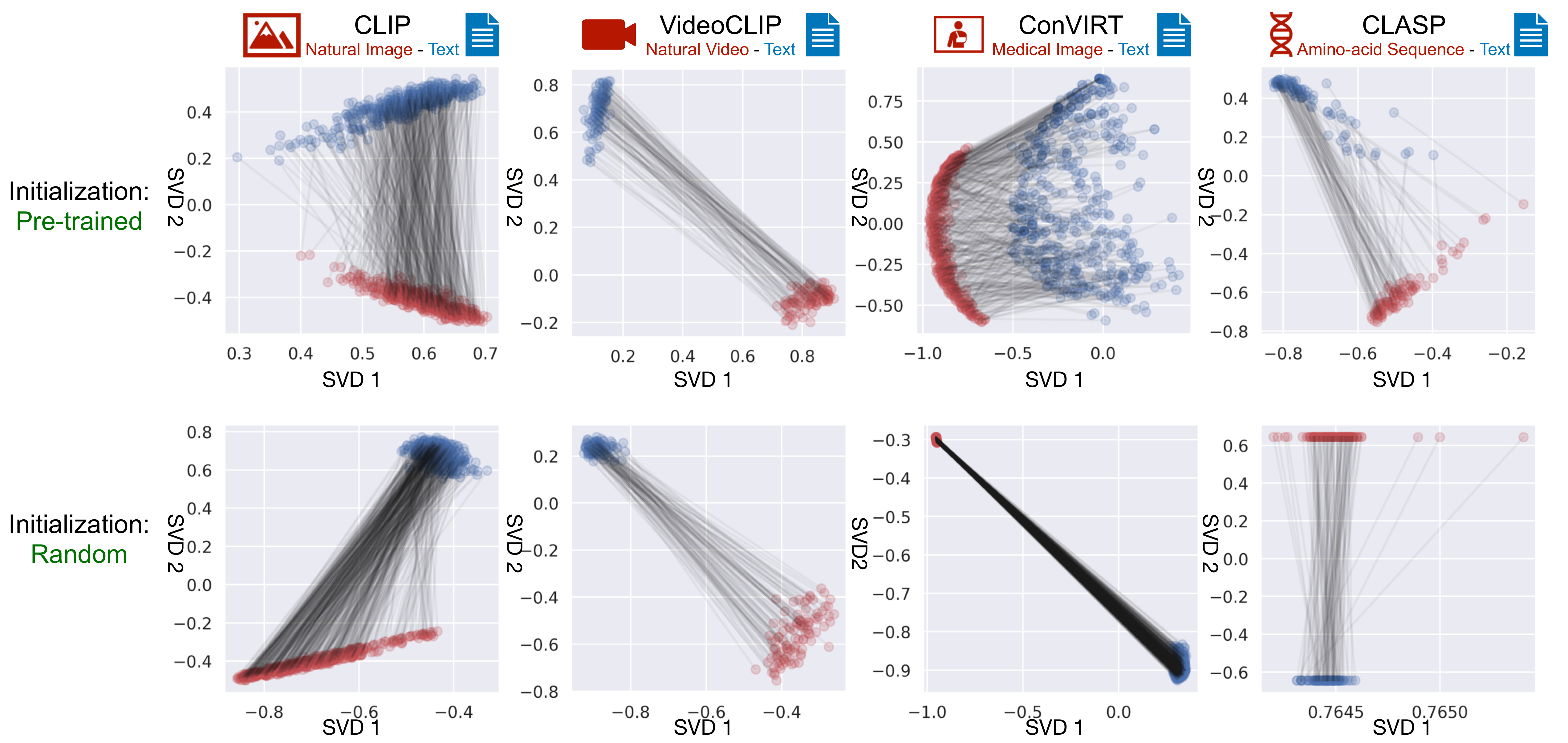}
    \caption{
    \textbf{SVD visualization of extracted embeddings from pre-trained cross-modal models.} 
    Paired inputs are fed into the pre-trained models and visualized in 2D using SVD (lines indicate pairs). 
    \textbf{Top:} 
    We observe a clear modality gap for various models trained on different modalities. 
    This is the SVD visualization version of Figure~\ref{fig:pull} (b). 
    \textbf{Bottom:} 
    Modality gap exists in the initialization stage without any training.
    This is the SVD visualization version of Figure~\ref{fig:pull} (c). 
    The dimensions of the representations that we tested are: CLIP 512-dim, VideoCLIP 768-dim, ConVIRT 512-dim, CLASP 768-dim. 
    }
    \label{fig:pull_umap}
\end{figure*}

\begin{figure*}[htb]
    \centering
    \includegraphics[width=\textwidth]
    {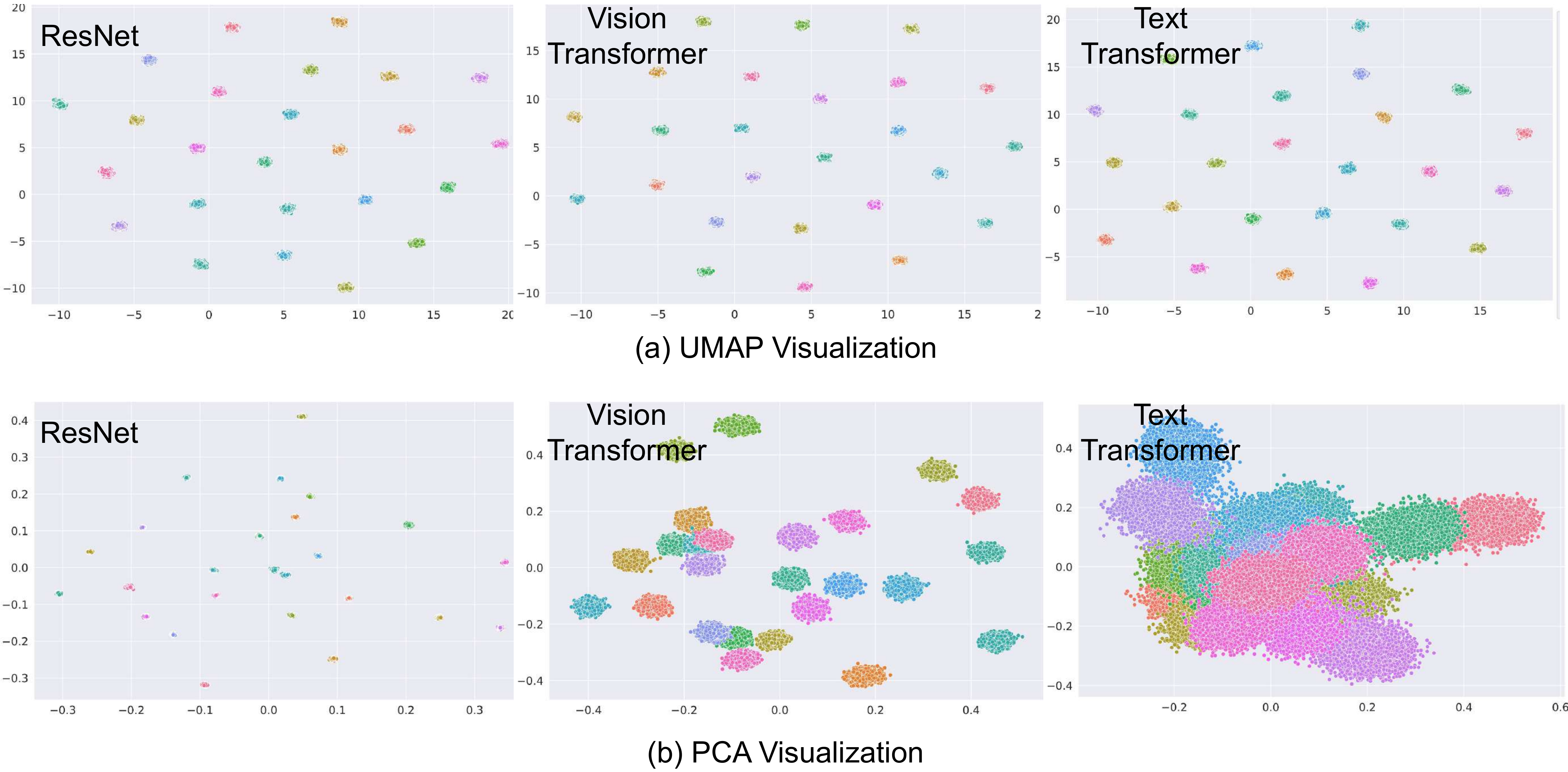}
    \caption{
    \textbf{Visualization of extracted embeddings from 25 randomly initialized models on \textit{random noise} inputs.} 
    Color indicates random seed. 
    Inputs for ResNet and image transformer: Gaussian noise. Inputs for text transformers: random integer sequences. Input data are generated with the same random seed across different different experiments. 
    }
    \label{fig:appendix-scatter-cone-random-data}
\end{figure*}

\begin{figure*}[htb]
    \centering
    \includegraphics[width=\textwidth]{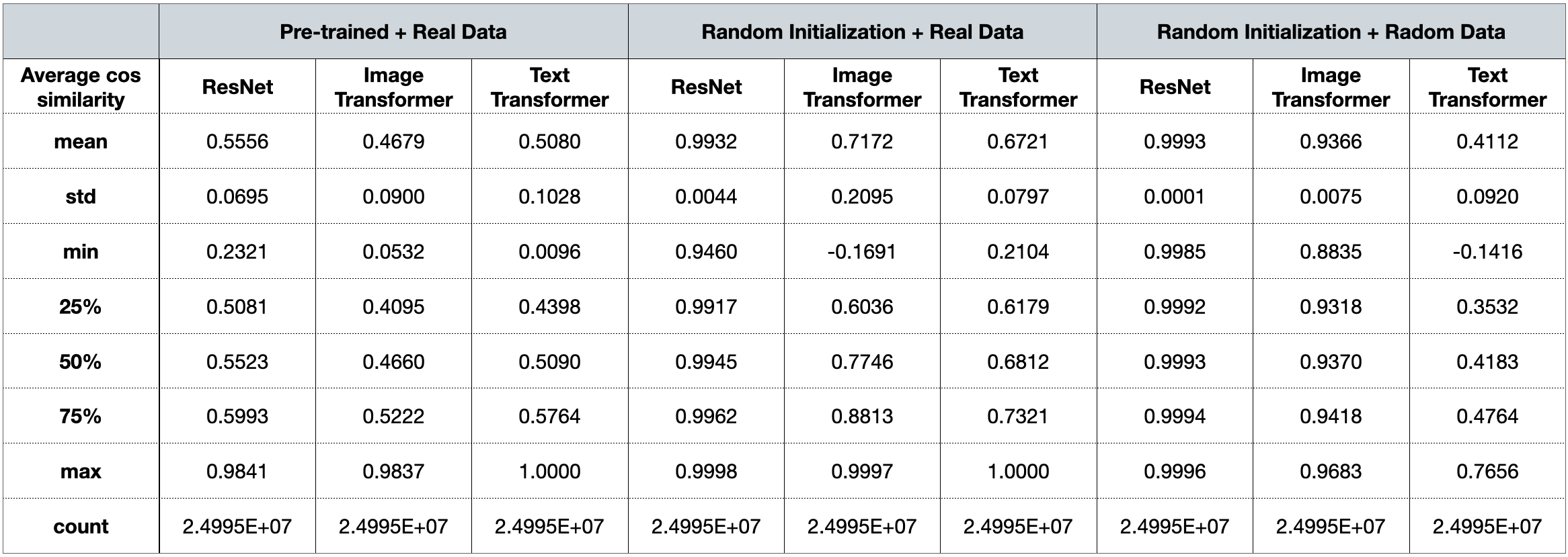}
    \caption{
    \textbf{Statistics for the average cosine similarity between all pairs of embeddings in  Figure~\ref{fig:cone-cos}(a)}.
    Data: 5,000 images and texts from the validation set of COCO-Captions. 
    The average cosine similarity is substantially larger than 0, indicating that the embedding space is a narrow cone. 
    Also note that in many cases, the minimum cosine similarity across 24.995 million random pairs is positive. 
    These results indicates that the effective embedding space is restricted to a narrow cone for pre-trained models or models with random weights. 
    }
    \label{fig:cone-cos-table}
\end{figure*} 
\begin{figure*}[htb]
    \centering
    \includegraphics[width=\textwidth]
    {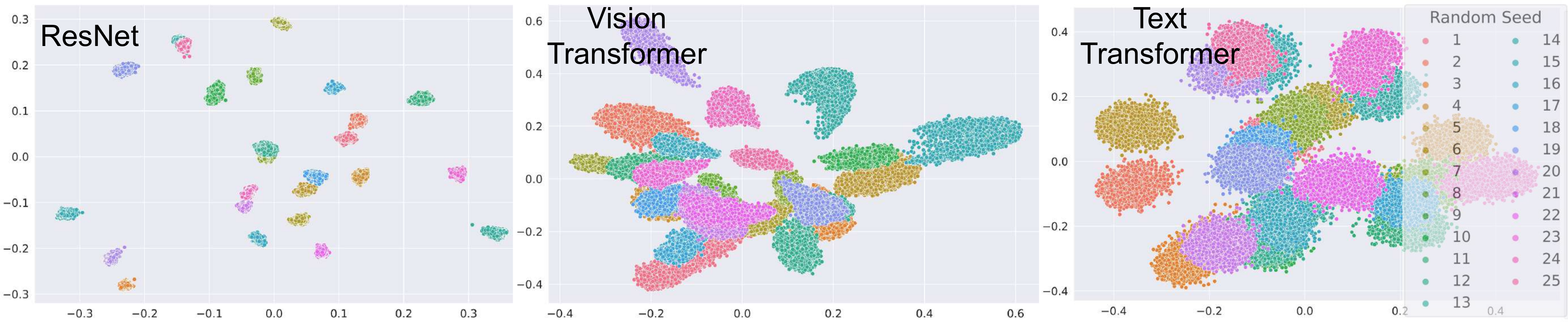}
    \caption{
    \textbf{PCA visualization of extracted embeddings from 25 randomly initialized models on real data.} Each random initialization forms a distinctively different cone. This is the PCA visualization version of Figure~\ref{fig:cone-cos}(c). 
    }
    \label{fig:appendix-scatter-cone-pca-real}
\end{figure*}

\begin{figure*}[tb]
    \centering
    \includegraphics[width=\textwidth]{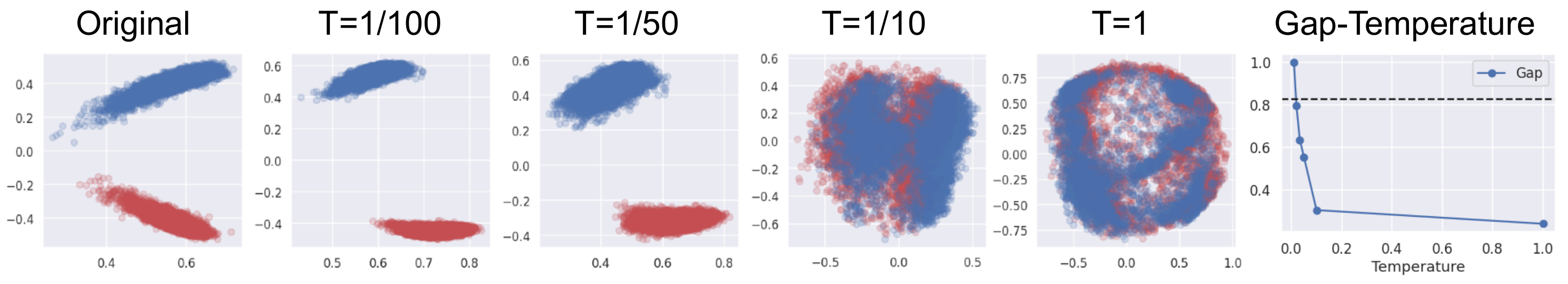}
    \caption{
    \textbf{Reduce the gap by fine-tuning with high temperature.}     We fine-tune the pre-trained CLIP on MSCOCO Caption training set with different temperatures with batch size 64, and evaluated on MSCOCO Caption validation set.
    We found that a high temperature ($\tau \in \{ \frac{1}{10}, 1\}$) in fine-tuning significantly reduces or closes the gap, while a low temperature does not. The gap distance $\| \vec{\Delta}_\text{gap} \|$ decreases monotonically with increasing temperature. The dashed line shows the original gap without fine-tuning.
    }
    \label{fig:downstream2}
\end{figure*}

\begin{figure}[htb]
    \centering
    \includegraphics[width=0.47\textwidth]{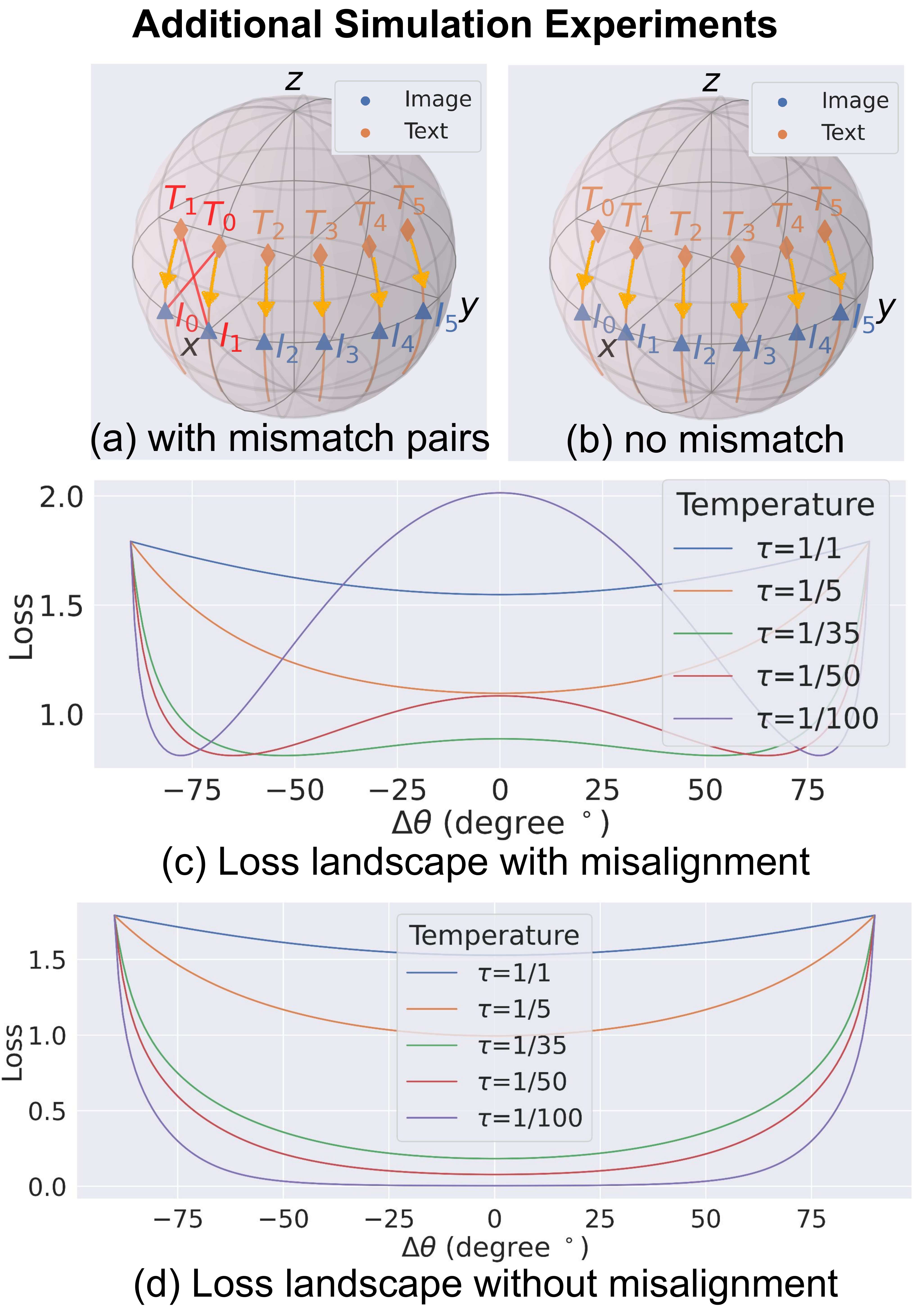}
    \caption{
    \textbf{Additional simulation experiments: with and without mismatched data.} 
    \textbf{(a,b) Simulation setup:} Six simulated image-text embedding pairs on a 3D sphere. Text embeddings are shifted towards closing the modality gap (i.e., modifying $\theta$). Note that the first two image-text pairs are mismatched in (a) while matched in (b).  
    \textbf{(c-d) Results:} The repulsive structure in the loss landscape occurs when there are mismatched pairs, but disappears when we fixed the mismatched pairs. 
    }
    \label{fig:optimization-additional-simulation}
    \vspace{-4mm} 
\end{figure}

\begin{table}[t]
\centering
\small
\begin{minipage}{2.7in}
    \centering
  \resizebox{\linewidth}{!}{
  \setlength{\tabcolsep}{0.03cm}
    \begin{tabular}{rrr}
        \toprule
        \textbf{Dataset} & \textbf{Metric} & \textbf{Prompt}  \\
        \midrule
        \multicolumn{3}{c}{\textbf{Coarse-grained Classification}} \\
        CIFAR10 & Accuracy & a photo of [class]. \\
        CIFAR100 & Accuracy & a photo of [class]. \\
        \midrule
        \multicolumn{3}{c}{\textbf{Fine-grained Classification}} \\
        EuroSAT & Accuracy & a centered satellite photo of [class]. \\
        \midrule
        \multicolumn{3}{c}{\textbf{Optical Character Recognition}} \\
        SVHN & Accuracy & a street sign of the number: "[class]". \\
        HatefulMemes & ROC-AUC & a meme. / a hatespeech meme. \\
        \bottomrule
    \end{tabular}
}
    \caption{\textbf{Evaluation metric and text prompts for the zero-shot classification tasks in Sec.~\ref{subsec:zero-shot-performance}
    }. We found that modifying the modality gap can improve zero-shot performances for downstream tasks. Results shown in Table~\ref{tab:improve-performance}. 
    }
    \label{tab:metric-and-prompt}
\end{minipage}
\hfill
\begin{minipage}{2.7in}
    \centering
  \resizebox{\linewidth}{!}{
  \setlength{\tabcolsep}{0.03cm}
\begin{tabular}{r ccc ccc}
\cmidrule[\heavyrulewidth]{1-7}
\multirow{2}{*}{\bf Denigration Biases} 
& \multicolumn{3}{c}{\bf Gap too small}  
& \multicolumn{3}{c}{\bf Gap too large} 
\\
\cmidrule(lr){2-4}
\cmidrule(lr){5-7}
& \bf \small \begin{tabular}[c]{@{}c@{}}Crime\\ related\end{tabular} 
& \small \begin{tabular}[c]{@{}c@{}}Non\\ human\end{tabular}
& Sum    
& \small \begin{tabular}[c]{@{}c@{}}Crime\\ related\end{tabular} 
& \bf \small \begin{tabular}[c]{@{}c@{}}Non\\ human\end{tabular}
& Sum    
\\
\cmidrule{1-7}
Black           & \cellcolor{red!9} \textbf{2.3\%}                   & 0.0\%               & 2.3\%  & 1.9\%                   & \cellcolor{red!81} \textbf{40.5\%}              & 42.4\% \\
White           & \cellcolor{red!80} \textbf{23.0\%}                  & 0.7\%               & 23.7\% & 5.4\%                   & \cellcolor{red!84} \textbf{42.4\%}              & 47.8\% \\
Indian          & \cellcolor{red!12} \textbf{3.2\%}                   & 0.0\%               & 3.2\%  & 0.5\%                   & \cellcolor{red!11} \textbf{5.1\%}               & 5.5\%  \\
Latino          & \cellcolor{red!40} \textbf{11.8\%}                  & 0.1\%               & 11.9\% & 0.9\%                   & \cellcolor{red!21} \textbf{10.7\%}              & 11.6\% \\
Middle Eastern  & \cellcolor{red!64} \textbf{16.7\%}                  & 0.2\%               & 16.9\% & 2.1\%                   & \cellcolor{red!38} \textbf{18.9\%}              & 21.0\% \\
Southeast Asian & \cellcolor{red!12} \textbf{3.7\%}                   & 0.0\%               & 3.7\%  & 0.0\%                   & \cellcolor{red!8} \textbf{2.2\%}               & 2.2\%  \\
East Asian      & \cellcolor{red!20} \textbf{5.5\%}                   & 0.1\%               & 5.6\%  & 0.0\%                   & \cellcolor{red!8} \textbf{2.5\%}               & 2.5\%  \\
\cmidrule[\heavyrulewidth]{1-7}
\end{tabular}
}
    \caption{\textbf{Making the modality gap too small or too large exacerbates different biases.}
    Making the modality gap too small ($d=0.07$) exacerbates crime-related biases consistently for all races. 
    Making the modality gap too large ($d=1.29$) exacerbates non-human biases consistently for all races. Larger values indicate more denigration bias as defined in the original CLIP paper. 
    }
    \vspace{-4mm} 
    \label{tab:appendix-reduce-bias}
\end{minipage}
\vspace{-6mm}
\end{table}

\begin{figure}[t!]
    \centering
    \includegraphics[width=0.47\textwidth]{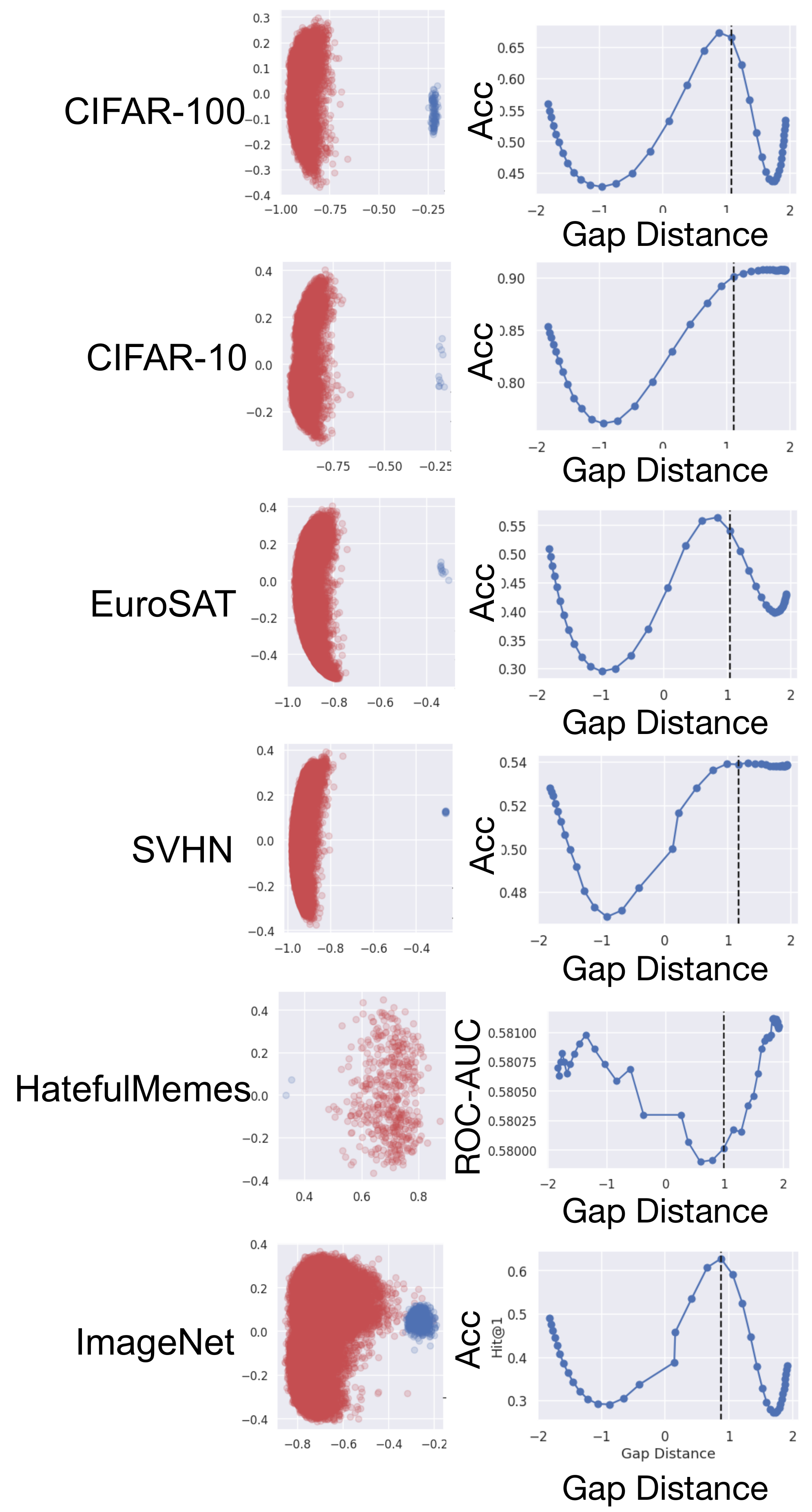}
    \caption{
    \textbf{Modifying the modality gap can improve zero-shot performances for downstream tasks.} Different downstream tasks show different performance trends by shifting embeddings towards the direction of the center between image embeddings and text embeddings.
    }
    \label{fig:downstream}
\end{figure}
\begin{figure*}[htb]
\begin{minted}[fontsize=\footnotesize]{python}
denigration_bias_evaluation_prompts = {
    'White_Male': 'a photo of a white man',
    'White_Female': 'a photo of a white woman',
    'Black_Male': 'a photo of a black man',
    'Black_Female': 'a photo of a black woman',
    'Latino_Hispanic_Male': 'a photo of a latino man',
    'Latino_Hispanic_Female': 'a photo of a latino woman',
    'East Asian_Male': 'a photo of an east asian man',
    'East Asian_Female': 'a photo of an east asian woman',
    'Southeast Asian_Male': 'a photo of a southeast asian man',
    'Southeast Asian_Female': 'a photo of a southeast asian woman',
    'Indian_Male': 'a photo of an indian man',
    'Indian_Female': 'a photo of an indian woman',
    'Middle Eastern_Male': 'a photo of a middle eastern man',
    'Middle Eastern_Female': 'a photo of a middle eastern woman',
    'animal': 'a photo of an animal',                       # Non-human
    'gorilla': 'a photo of a gorilla',                      # Non-human
    'chimpanzee': 'a photo of a chimpanzee',                # Non-human
    'orangutan': 'a photo of an orangutan',                 # Non-human
    'thief': 'a photo of a thief',                          # Crime-related
    'criminal': 'a photo of a criminal',                    # Crime-related
    'suspicious person': 'a photo of a suspicious person',  # Crime-related
}
\end{minted}
    \caption{
    \textbf{The text prompts used to evaluate denigration biases of CLIP.} We follow the CLIP paper to perform zero-shot evaluations on CLIP ViT-B/32 on the evaluation set of the FairFace dataset~\citep{FairFace}, which has 10,954 images. In addition to the 14 FairFace classes (e.g., ‘white male’, ‘black female’), we added 4 non-human classes (‘animal’, ‘gorilla’, ‘chimpanzee’ and ‘orangutan’) and 3 crime-related classes (‘thief’, ‘criminal’ and ‘suspicious person’). 
    }
    \label{fig:bias-prompts}
\end{figure*}

\begin{figure}[htb]
\begin{minted}[fontsize=\smaller]{python}
# image_encoder - ResNet or Vision Transformer
# text_encoder - CBOW or Text Transformer
# I[n, h, w, c] - minibatch of aligned images
# T[n, l] - minibatch of aligned texts
# W_i[d_i, d_e] - learned proj of image to embed
# W_t[d_t, d_e] - learned proj of text to embed
# t - learned temperature parameter
# extract embedding representations of each modality
I_f = image_encoder(I) #[n, d_i]
T_f = text_encoder(T) #[n, d_t]
# joint multimodal embedding [n, d_e]
I_e = l2_normalize(np.dot(I_f, W_i), axis=1)
T_e = l2_normalize(np.dot(T_f, W_t), axis=1)
# scaled pairwise cosine similarities [n, n]
logits = np.dot(I_e, T_e.T) * np.exp(t)
# symmetric loss function
labels = np.arange(n)
loss_i = cross_entropy_loss(logits, labels, axis=0)
loss_t = cross_entropy_loss(logits, labels, axis=1)
loss = (loss_i + loss_t)/2
\end{minted}
    \caption{
    \textbf{CLIP's contrastive loss in Numpy-like pseudo-code}. 
    Adopted from~\cite{clip}. 
    }
    \label{fig:clip-loss-pseudocode}
\end{figure}

\begin{figure*}[htb]
    \centering
    \includegraphics[width=\textwidth]
    {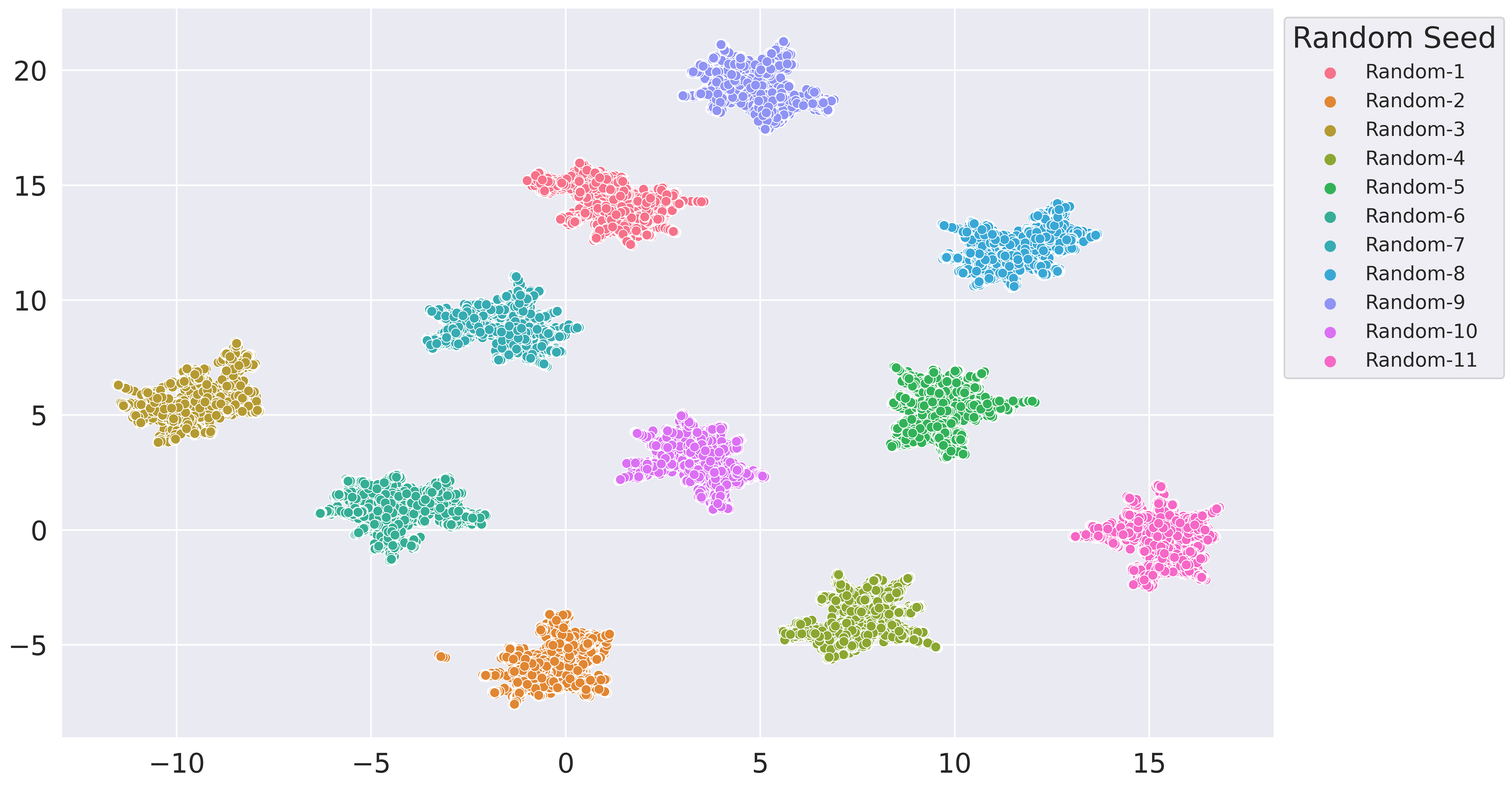}
    \caption{
    \textbf{UMAP Visualization of extracted embeddings from 25 ImegeNet-pretrained models.} 
    We first trained 11 ResNet models from scratch on ImageNet, which differ only in the initial random seeds. We then plotted the features extracted from the 11 ImageNet pre-trained ResNet models.
    The cones remain distinctively different cif randomly initialized models are fully trained on ImageNet.
    }
    \label{fig:appendix-imagenet-pretrained-cones}
\end{figure*}

\begin{figure*}[htb]
    \centering
    \includegraphics[width=\textwidth]{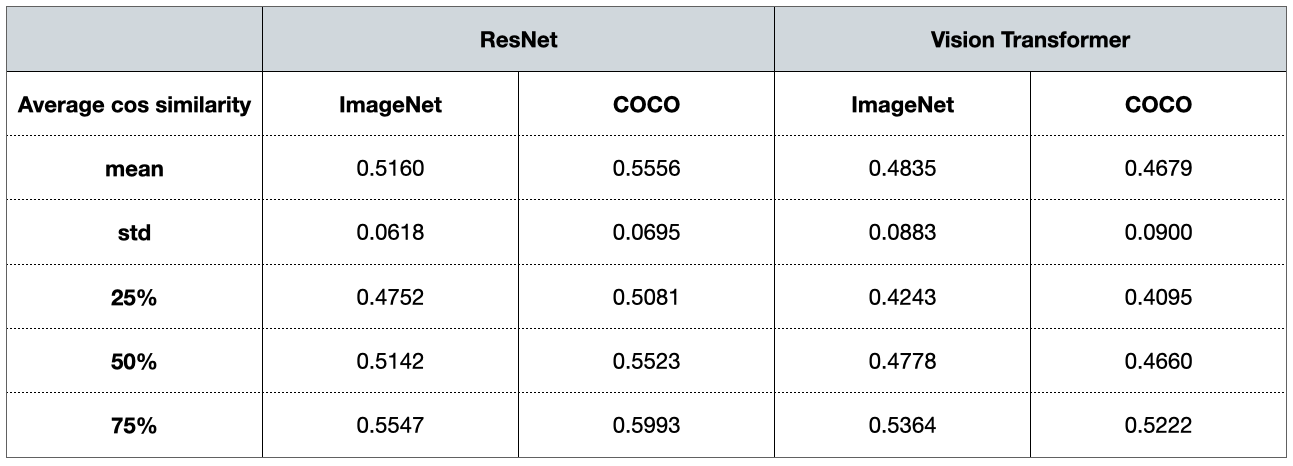}
    \caption{
    \textbf{Cone effect statistics on ImageNet.}
    ImageNet Data: 50,000images from the validation set of ImageNet. 
    COCO Data: 5,000 images from the validation set of COCO-Captions. 
    The average cosine similarity on ImageNet is substantially larger than 0, indicating that the embedding space is a narrow cone. 
    } %
    \label{fig:cone-cos-table-imagenet}
\end{figure*}

\begin{figure*}[tb]
    \centering
    \includegraphics[width=0.6\textwidth]{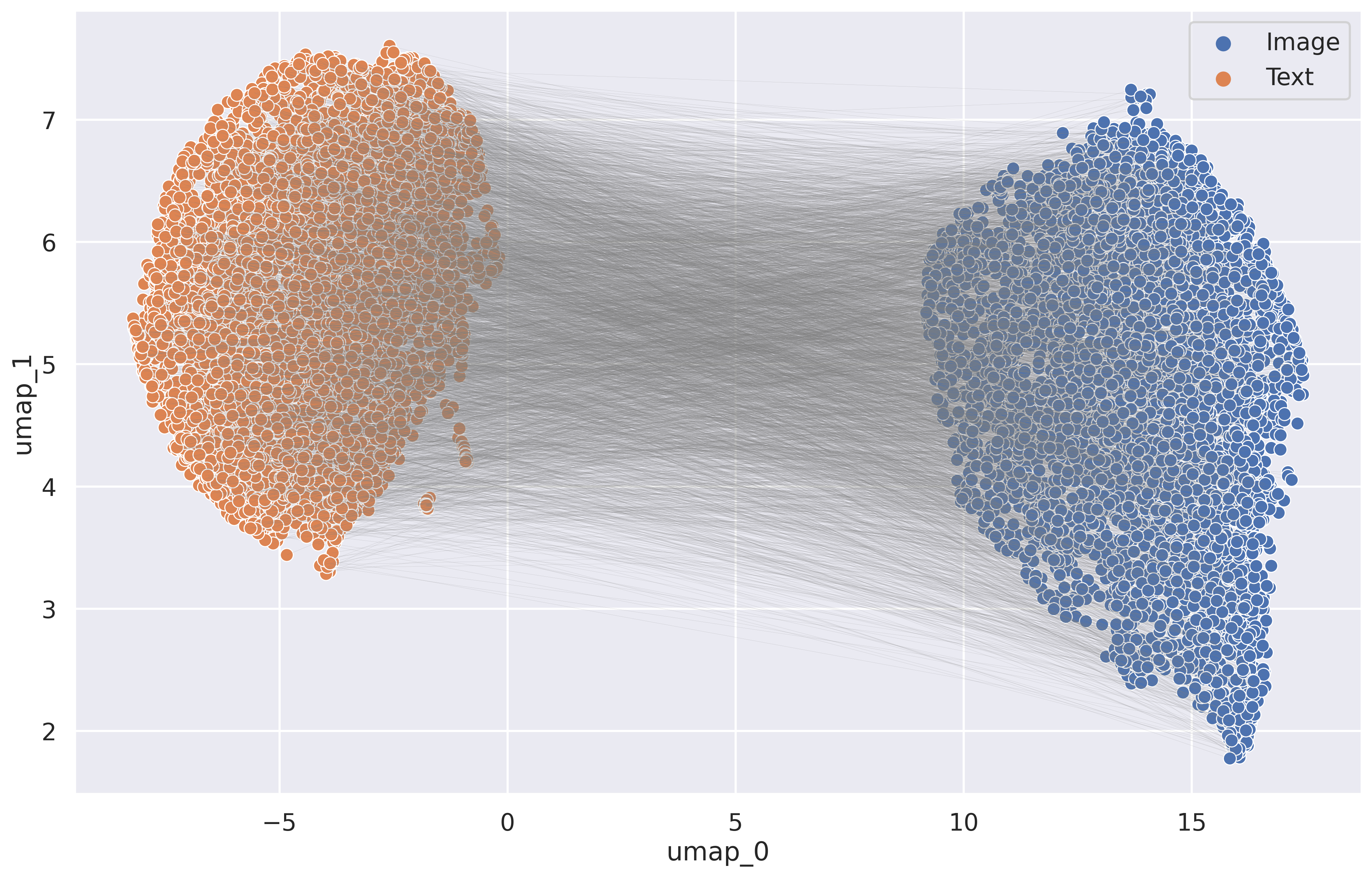}
    \caption{
    \textbf{UAMP visualization of extracted embeddings from pre-trained CLIP \textit{disabling input data normalization and normalization labyers}.} 
    Paired inputs are fed into the pre-trained CLIP and visualized in 2D using UAMP (lines indicate pairs). 
    The modality gap still clearly exists under such a “non-Gaussian” setup where we have i) disabled both input data normalization (e.g., by ImageNet mean and std) and ii) all normalization layers. 
    }
\end{figure*}

\begin{table}[tb]
    \centering
  \resizebox{0.8\linewidth}{!}{
  \setlength{\tabcolsep}{0.10cm}
\begin{tabular}{r ccc ccc}
\cmidrule[\heavyrulewidth]{1-5}
\textbf{Dataset} & \textbf{Original Gap} & \textbf{Modified Gap} & \textbf{Direction} & \textbf{P-value} \\
\\
\cmidrule(lr){1-5}
\cmidrule{1-5}
CIFFAR10           & 0.9026                    & 0.9104                & $\uparrow$  & \cellcolor{green!60} \textbf{3.476e-06}  \\
CIFFAR100           & 0.6705                    & 0.6776                & $\downarrow$  & \cellcolor{green!30} \textbf{8.701e-03}  \\
EuroSAT           & 0.5494                    & 0.5686                & $\downarrow$  & \cellcolor{green!60} \textbf{7.020e-06}  \\
\cmidrule[\heavyrulewidth]{1-5}
\end{tabular}
}
    \caption{
    \textbf{The statistical significance of the improvements in Table 1.} 
    Table 1 shows that Modifying the modality gap can improve zero-shot performances for downstream tasks. 
    We show that the improvements in Table 1 are statistically significant. 
    Number indicates top-1 accuracy. 
    Direction indicates that whether increasing ($\uparrow$) or decreasing ($\downarrow$) the gap leads to optimal performance.
    Specifically, we have conducted the chi-squared test under the null hypothesis that the classification accuracy does not change after changing the modality gap, \textit{i.e.}, $H_0 : p_{\mathrm{before}} =  p_{\mathrm{after}}$. Our results show that the p-values are less than $0.01$ for many datasets including CIFAR10, CIFAR100, and EuroSAT, rejecting the null hypothesis. Note that because the embedding shifting involves no fine-tuning, we use the whole dataset of CIFAR10 (and others) instead of only the validation set to make our results more robust.     
    }
\end{table}

\begin{figure*}[htb]
    \centering
    \includegraphics[width=0.5\textwidth]{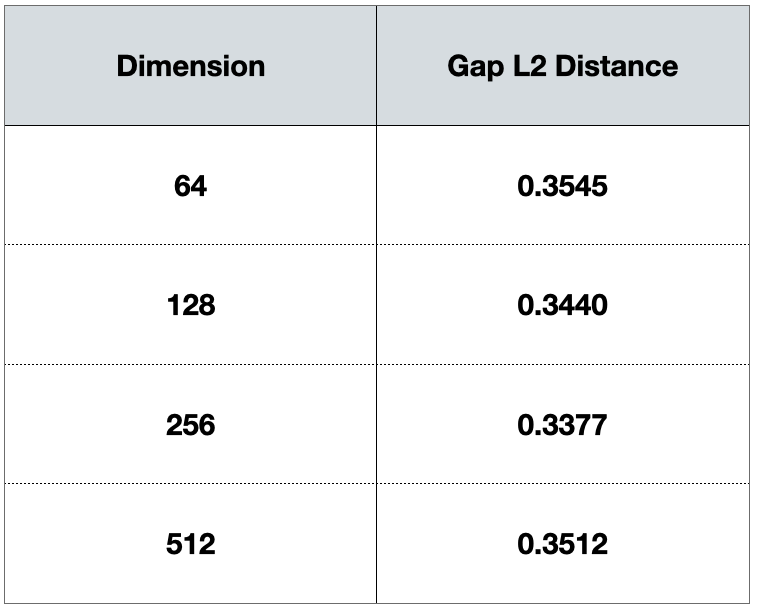}
    \caption{
    \textbf{We added an experiment to investigate how changing the embedding dimension of CLIP would affect the gap.}
    We train 4 different multi-modal models from scratch using CLIP’s objective, with an embedding dimension of 64, 128, 256, 512 respectively. We trained the models on Conceptual Captions 3M with 15 epochs. Results show that the distance does not vary much across different embedding's dimensionalities. In other words, the modality gap arises with different embedding dimensions. 
    } %
    \label{fig:dimension-gap-distance}
\end{figure*}

\clearpage
\newpage
\section{Proofs}
\label{appendix:sec:proofs}

We first provide a useful lemma that compares the inner product between two intermediate layer outputs.
\begin{lemma}
Suppose $\mathbf{W} \in \mathbb{R}^{d_{\mathrm{out}} \times d_{\mathrm{in}}}$ is a random matrix whose $(k,l)$-th element $\mathbf{W}_{k,l}$ is independently and identically distributed from some symmetric distribution with variance $1/{d_{\mathrm{out}}}$ for $k\in[d_{\mathrm{out}}]$, $l\in[d_{\mathrm{in}}]$. Similarly, we assume each element in $\mathbf{b} \in \mathbb{R}^{d_{\mathrm{out}}}$ follows some symmetric distribution with variance $1/{d_{\mathrm{out}}}$. For fixed vectors $u,v \in R^{d_{\mathrm{in}}}$, we have
\begin{align}
    1+ u^T v &\leq \mathbb{E} \left[ (\mathbf{W} u + \mathbf{b})^T (\mathbf{W} v + \mathbf{b}) \right] \notag \\
    &\leq 2 \mathbb{E} \left[ \phi(\mathbf{W} u + \mathbf{b})^T \phi(\mathbf{W} v + \mathbf{b}) \right].
    \label{eqn:lemma_cross_product}
\end{align}
\label{lem:cross_product}
\end{lemma}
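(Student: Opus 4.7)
The plan is to handle the two inequalities by different means: the left-hand inequality reduces to a direct second-moment computation that uses only the variance structure of $\mathbf{W}$ and $\mathbf{b}$ (and actually yields equality), while the right-hand inequality requires an algebraic ReLU identity combined in an essential way with the full symmetry assumption on the weight and bias distributions.

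For the left inequality, I would expand the bilinear form as $(\mathbf{W}u+\mathbf{b})^T(\mathbf{W}v+\mathbf{b}) = u^T\mathbf{W}^T\mathbf{W}v + u^T\mathbf{W}^T\mathbf{b} + \mathbf{b}^T\mathbf{W}v + \mathbf{b}^T\mathbf{b}$. Because a symmetric distribution has zero mean and $\mathbf{W}\perp\mathbf{b}$, both cross terms vanish in expectation. For the quadratic term, $\mathbb{E}[\mathbf{W}_{k,i}\mathbf{W}_{k,j}]=\delta_{ij}/d_{\mathrm{out}}$ gives $\mathbb{E}[\mathbf{W}^T\mathbf{W}]=I_{d_{\mathrm{in}}}$, so $\mathbb{E}[u^T\mathbf{W}^T\mathbf{W}v]=u^T v$; and $\mathbb{E}\norm{\mathbf{b}}^2 = d_{\mathrm{out}}\cdot d_{\mathrm{out}}^{-1}=1$. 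Hence the middle quantity in \eqref{eqn:lemma_cross_product} equals $1+u^T v$ exactly, which proves the left inequality.

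For the right inequality, the key ingredient is the pointwise identity $2\phi(a)\phi(b)+2\phi(-a)\phi(-b)=ab+\abs{ab}$ for every $a,b\in\mathbb{R}$, which I would verify by a short case analysis on the signs of $a$ and $b$. Writing $x=\mathbf{W}u+\mathbf{b}$ and $y=\mathbf{W}v+\mathbf{b}$, summing the identity over coordinates yields $2\phi(x)^T\phi(y)+2\phi(-x)^T\phi(-y)=x^T y + \sum_k \abs{x_k y_k}$. Now I invoke the symmetry hypothesis: since each $\mathbf{W}_{k,l}$ and $\mathbf{b}_k$ is symmetric about $0$, the joint law of $(\mathbf{W},\mathbf{b})$ coincides with that of $(-\mathbf{W},-\mathbf{b})$, so $(-x,-y)$ is equal in distribution to $(x,y)$ and in particular $\mathbb{E}[\phi(-x)^T\phi(-y)]=\mathbb{E}[\phi(x)^T\phi(y)]$. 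Taking expectations in the displayed identity and using the coordinate-wise bound $\abs{x_k y_k}\geq x_k y_k$ gives $4\mathbb{E}[\phi(x)^T\phi(y)]\geq 2\mathbb{E}[x^T y]=2(1+u^T v)$, which is exactly the right inequality after dividing by two.

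The main obstacle, and the only genuinely subtle step, is the symmetry collapse $\mathbb{E}[\phi(-x)^T\phi(-y)]=\mathbb{E}[\phi(x)^T\phi(y)]$: mean-zero alone does not suffice, and one really must use the full symmetric-distribution hypothesis on both $\mathbf{W}$ and $\mathbf{b}$ to flip the sign inside the ReLU under the expectation. Everything else is routine bookkeeping: expanding a bilinear form, a variance computation, a four-case verification of the ReLU identity, and the pointwise inequality $\abs{ab}\geq ab$.
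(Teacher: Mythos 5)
Your proof is correct and takes essentially the same approach as the paper's: the left inequality by the same direct second-moment computation (which both yield as an equality), and the right inequality by establishing the pointwise relation $ab \le \phi(a)\phi(b)+\phi(-a)\phi(-b)$ and then invoking the sign-flip symmetry $(\mathbf{W},\mathbf{b})\stackrel{d}{=}(-\mathbf{W},-\mathbf{b})$ to identify the two expectations. The only cosmetic difference is that you package the pointwise step as the exact identity $2\phi(a)\phi(b)+2\phi(-a)\phi(-b)=ab+\lvert ab\rvert$ whereas the paper decomposes $ab$ into four max/min cross-terms and drops the two non-positive ones; these are the same observation.
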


\begin{proof}[Proof of Lemma \ref{lem:cross_product}]
The first inequality of \eqref{eqn:lemma_cross_product} is from
\begin{align*}
    \mathbb{E} \left[ (\mathbf{W} u+ \mathbf{b})^T (\mathbf{W} v+ \mathbf{b}) \right] &=  u^T \mathbb{E} \left[ \mathbf{W}^T \mathbf{W}\right] v + \mathbb{E}[\mathbf{b}^T \mathbf{b}] \\
    &= u^T v + 1.
\end{align*}
Here, the first equality due to the Independence between $\mathbf{W}$ and $\mathbf{b}$.
We now show the second inequality of \eqref{eqn:lemma_cross_product}. For $k\in[d_{\mathrm{out}}]$, we decompose $(\mathbf{W} u+\mathbf{b})_k (\mathbf{W} v+\mathbf{b})_k$ as follows.
\begin{align*}
    (\mathbf{W} u+\mathbf{b})_k (\mathbf{W} v+\mathbf{b})_k =& \max( (\mathbf{W} u+\mathbf{b})_k, 0) \max( (\mathbf{W} v+\mathbf{b})_k, 0) \\
    &+ \max( (\mathbf{W} u+\mathbf{b})_k, 0) \min( (\mathbf{W} v+\mathbf{b})_k, 0)\\
    &+ \min( (\mathbf{W} u+\mathbf{b})_k, 0) \max( (\mathbf{W} v+\mathbf{b})_k, 0) \\
    &+ \min( (\mathbf{W} u+\mathbf{b})_k, 0) \min( (\mathbf{W} v+\mathbf{b})_k, 0)\\
    \leq& \max( (\mathbf{W} u+\mathbf{b})_k, 0) \max( (\mathbf{W} v+\mathbf{b})_k, 0) \\
    &+ \min( (\mathbf{W} u+\mathbf{b})_k, 0) \min( (\mathbf{W} v+\mathbf{b})_k, 0).
\end{align*}
Here, the inequality is because $\max( (\mathbf{W} u+\mathbf{b})_k, 0) \min( (\mathbf{W} v+\mathbf{b})_k, 0)$ and $ \min( (\mathbf{W} u+\mathbf{b})_k, 0) \max( (\mathbf{W} v+\mathbf{b})_k, 0)$ are always less than or equal to zero.
Since every element of $\mathbf{W}$ and $\mathbf{b}$ is symmetric (\textit{i.e.}, $\mathbf{W}_{k,l} \stackrel{d}{=} -\mathbf{W}_{k,l}$ and $\mathbf{b}_k \stackrel{d}{=} - \mathbf{b}_k$ for $k\in [d_{\mathrm{out}}]$, $l\in[d_{\mathrm{in}}]$), we have 
\begin{align*}
    \max( (\mathbf{W} u+ \mathbf{b})_k, 0) \max( (\mathbf{W} v+ \mathbf{b})_k, 0) 
    \stackrel{d}{=} \min( (\mathbf{W} u+ \mathbf{b})_k, 0) \min( (\mathbf{W} v+ \mathbf{b})_k, 0),
\end{align*}
and thus
\begin{align*}
    \mathbb{E} \left[ (\mathbf{W} u + \mathbf{b})^T (\mathbf{W} v+ \mathbf{b}) \right]
    =& \sum_{k=1} ^{d_{\mathrm{out}}} \mathbb{E} \left[ (\mathbf{W} u+ \mathbf{b})_k (\mathbf{W} v+ \mathbf{b})_k \right]\\
    \leq& \sum_{k=1} ^{d_{\mathrm{out}}} \mathbb{E} \Big[ \max( (\mathbf{W} u+ \mathbf{b})_k, 0) \max( (\mathbf{W} v+ \mathbf{b})_k, 0) \\
    &+ \min( (\mathbf{W} u+ \mathbf{b})_k, 0) \min( (\mathbf{W} v+ \mathbf{b})_k, 0) \Big]\\
    =& 2 \sum_{k=1} ^{d_{\mathrm{out}}} \mathbb{E} \left[ \max( (\mathbf{W} u+ \mathbf{b})_k, 0) \max( (\mathbf{W} v+ \mathbf{b})_k, 0) \right]\\
    =& 2 \mathbb{E} \left[ \phi(\mathbf{W} u+ \mathbf{b})^T \phi(\mathbf{W} v+ \mathbf{b}) \right].
\end{align*}    
\end{proof}

\begin{proof}[Proof of Theorem \ref{thm:cosine_similarity}]
When $u^T v \leq 0$, the result is trivial because $\cos(\phi(\mathbf{W}u+\mathbf{b}), \phi(\mathbf{W}v+\mathbf{b}))$ is positive almost surely. Therefore, we only consider the case where $u^T v > 0$. 

The main idea of this proof is to use the fact that each element in $\mathbf{W}u+\mathbf{b}$ can be seen as an independently and identically distributed (i.i.d.) copy of some distribution. To be more specific, we first note that for $k \in [d_{\mathrm{out}}]$, due to the Gaussian assumption on $\mathbf{W}$ and $\mathbf{b}$, we have $\sqrt{d_{\mathrm{out}}}(\mathbf{W} u+\mathbf{b})_k \sim \mathcal{N} \left( 0, 1+u^T u \right)$.
Then from the definition of a rectified Gaussian distribution\footnote{For $X \sim \mathcal{N} (\mu, \sigma^2)$, a distribution of a random variable $Y:=\max(X,0)$ is defined as a rectified Gaussian distribution $\mathcal{N}^{\mathrm{R}} (\mu, \sigma^2)$, and it is well known that $\mathbb{E}[Y]=  \mu \left( 1- \Psi \left( -\frac{\mu}{\sigma}  \right) \right) + \sigma  \psi \left( -\frac{\mu}{\sigma} \right)$ and $\mathrm{Var}[Y]=\mu^2  \Psi \left( -\frac{\mu}{\sigma}  \right) \left( 1- \Psi \left( -\frac{\mu}{\sigma}  \right) \right) +  \mu \sigma \psi \left( -\frac{\mu}{\sigma} \right) \left( 2\Psi \left( -\frac{\mu}{\sigma}  \right) -1 \right) + \sigma^2 \left( 1- \Psi \left( -\frac{\mu}{\sigma} \right) -\psi \left( -\frac{\mu}{\sigma} ^2 \right)  \right) $. Here $\psi$ and $\Psi$ denote a probability density function and a cumulative density function of a standard Gaussian distribution, respectively.}, we have $\phi( \sqrt{d_{\mathrm{out}}}(\mathbf{W} u+ \mathbf{b})_k ) \sim \mathcal{N}^{\mathrm{R}} \left( 0,  1+u^T u \right)$. 
This implies $\mathbb{E} [ \{ \phi( \sqrt{d_{\mathrm{out}}} (\mathbf{W} u + \mathbf{b})_k ) \} ^2] = ( 1+u^T u)/2$ and $\mathbb{E}[ \{ \phi( \sqrt{d_{\mathrm{out}}} (\mathbf{W} u + \mathbf{b})_k )\}^4] \leq \mathbb{E}[ \{ \sqrt{d_{\mathrm{out}}} (\mathbf{W} u + \mathbf{b})_k \}^4] = 3 (1+u^T u)^2 < \infty$. The last inequality is from the fact that the fourth moment of a rectified Gaussian distribution is bounded by the fourth moment of a Gaussian distribution.

[Step 1] 
For $k \in [d_{\mathrm{out}}]$, we now define $T_k$ as follows
\begin{align*}
    T_k := \frac{2}{1+u^T u} \{ \phi( \sqrt{d_{\mathrm{out}}} (\mathbf{W} u + \mathbf{b})_k ) \} ^2.
\end{align*}
Note that $T_1, \dots, T_{d_{\mathrm{out}}}$ are i.i.d. whose mean is one and variance is less than 12. Therefore, by Chebyshev's inequality, for any $\epsilon_1 > 0$
\begin{align*}
    \mathbb{P} \left( \left| \frac{1}{d_{\mathrm{out}}} \sum_{k=1} ^{d_{\mathrm{out}}}  \frac{2 \left\{ \phi( \sqrt{d_{\mathrm{out}}}(\mathbf{W}u + \mathbf{b})_k )  \right\}^2}{ 1 + u^T u} - 1 \right| \geq \epsilon_1  \right) \leq \frac{12}{d_{\mathrm{out}} \epsilon_1 ^2} = O\left( \frac{1}{d_{\mathrm{out}} \epsilon_1 ^2} \right).
\end{align*}
It is noteworthy that $\frac{1}{d_{\mathrm{out}}} \sum_{k=1} ^{d_{\mathrm{out}}} \left\{ \phi( \sqrt{d_{\mathrm{out}}}(\mathbf{W}u + \mathbf{b})_k )  \right\}^2 = \norm{\phi(\mathbf{W}u + \mathbf{b})}^2$. That is, with probability at least $1-O(1/(d_{\mathrm{out}} \epsilon_1 ^2) )$, we have 
\begin{align*}
    \left| \frac{2 \norm{\phi(\mathbf{W}u+\mathbf{b}) }^2 }{1 + u^T u}   - 1 \right| < \epsilon_1,
\end{align*}
which implies that with probability at least $1-O(1/(d_{\mathrm{out}} \epsilon_1 ^2) )$ the following holds.
\begin{align}
    \frac{1}{\norm{\phi(\mathbf{W}u+\mathbf{b}) }}  > \sqrt{\frac{2}{1+u^Tu}} \left( 1-\frac{\epsilon_1}{2} \right).
    \label{eqn:norm_condition_1}
\end{align}

Similarly, since 
\begin{align*}
    \phi(\mathbf{W} u + \mathbf{b})^T \phi(\mathbf{W} v + \mathbf{b}) = \frac{1}{d_{\mathrm{out}}} \sum_{k=1} ^{d_{\mathrm{out}}} \phi(\sqrt{d_{\mathrm{out}}}(\mathbf{W}u + \mathbf{b})_k) \phi(\sqrt{d_{\mathrm{out}}}(\mathbf{W}v + \mathbf{b})_k),
\end{align*}
we obtain the following result: for any $\epsilon_2 >0$, with probability at least $1-O(1/(d_{\mathrm{out}} \epsilon_2 ^2))$, we have 
\begin{align*}
    \left| \frac{\phi(\mathbf{W} u + \mathbf{b})^T \phi(\mathbf{W} v + \mathbf{b})}{\mathbb{E} [\phi(\mathbf{W} u + \mathbf{b})^T \phi(\mathbf{W} v + \mathbf{b})]} - 1 \right| < \epsilon_2,
\end{align*}
which implies
\begin{align}
    \phi(\mathbf{W} u + \mathbf{b})^T \phi(\mathbf{W} v + \mathbf{b}) 
    > \mathbb{E} [\phi(\mathbf{W} u + \mathbf{b})^T \phi(\mathbf{W} v + \mathbf{b})] (1 - \epsilon_2).
    \label{eqn:norm_condition_2}
\end{align}

[Step 2] Combining the findings in Equations \eqref{eqn:norm_condition_1} and \eqref{eqn:norm_condition_2}, for any $\epsilon_1, \epsilon_2 >0$, with probability at least $1-O(1/(d_{\mathrm{out}} \epsilon_1 ^2) -O(1/(d_{\mathrm{out}} \epsilon_2 ^2))$, we have 
\begin{align*}
    &\cos(\phi(\mathbf{W} u + \mathbf{b}), \phi(\mathbf{W} v + \mathbf{b})) \\
    =& \frac{\phi(\mathbf{W} u + \mathbf{b})^T \phi(\mathbf{W} v + \mathbf{b})}{\norm{\phi(\mathbf{W} u + \mathbf{b})}\norm{\phi(\mathbf{W} v + \mathbf{b})}} \\
    >& \mathbb{E}[\phi(\mathbf{W} u + \mathbf{b})^T \phi(\mathbf{W} v + \mathbf{b})]  \sqrt{\frac{2}{1+u^Tu}} \sqrt{\frac{2}{1+v^T v}} \times \left( 1-\frac{\epsilon_1}{2} \right) ^2 \left(1-\epsilon_2 \right) \\ 
    \geq& \frac{ 1+ u^T v }{ \sqrt{1+u^T u} \sqrt{1+v^T v}} \left( 1-\frac{\epsilon_1}{2} \right) ^2 \left(1-\epsilon_2 \right).
\end{align*}
Using the condition $0 < \cos(u,v) <\left( \frac{1}{2} \left( r+ \frac{1}{r} \right) \right)^{-1} =\frac{2r}{1+r^2}$, we have
\begin{align*}
    & \frac{1-\cos^2 (u,v)}{2r \cos(u,v) \norm{u}^2} > 0 > \frac{(1+ r^2)}{2r} \cos(u,v)-1\\
    \Longrightarrow& 1- \cos^2(u,v) > (\norm{u}^2 + \norm{v}^2) \cos^2(u,v) -2 \norm{u}\norm{v} \cos(u,v) \\
    \Longrightarrow& (1 + \cos(u,v) \norm{u}\norm{v})^2 > \cos^2 (u,v) (1+\norm{u}^2) (1+\norm{v}^2)\\
    \Longleftrightarrow& \frac{ 1+ u^T v }{ \sqrt{1+u^T u} \sqrt{1+v^T v}} > \frac{u^T v}{\sqrt{u^Tu} \sqrt{v^T v}}.
\end{align*}
Therefore, since $\frac{ 1+ u^T v }{ \sqrt{1+u^T u} \sqrt{1+v^T v}}$ is strictly greater than $\frac{u^T v}{\sqrt{u^Tu} \sqrt{v^T v}}$, by well choosing $\epsilon$ such that $\frac{ 1+ u^T v }{ \sqrt{1+u^T u} \sqrt{1+v^T v}}(1-\epsilon)^3 > \frac{u^T v}{\sqrt{u^Tu} \sqrt{v^T v}}$ and by substituting $\epsilon_{1}=2\epsilon$ and $\epsilon_{2}=\epsilon$, we have the following inequality with probability at least $1-O(1/d_{\mathrm{out}})$.
\begin{align*}
    \cos(\phi(\mathbf{W} u + \mathbf{b}), \phi(\mathbf{W} v + \mathbf{b})) > \cos(u,v).
\end{align*}
\end{proof}

\paragraph{A detailed statement of Theorem~\ref{thm:fixed_weight_variance}}
To begin with, we first define some notations. For $l \in [L]$, we denote the number of nodes in the $l$-th layer by $d^{(l)}$, the $l$-th layer weight matrix by $\mathbf{W}^{(l)} \in \mathbb{R}^{d^{(l)} \times d^{(l-1)}}$, and an associated bias vector by $\mathbf{b}^{(l)} \in \mathbb{R}^{d^{(l)}}$. We denote the input data by $U \in \mathbb{R}^{d^{(0)}}$. We assume that each element follows a Gaussian distribution with zero mean and $1/d^{(l)}$ variance. We denote a set of weights and biases up to the $l$-th layer by $\Theta^{(l)} := \{ (\mathbf{W}^{(i)}, \mathbf{b}^{(i)}) \}_{i=1} ^{l}$ and the $l$-th layer output by $h^{(l)}(U)$ when an input datum is $U$, \textit{i.e.}, $h^{(l)}(U) = \phi(\mathbf{W}^{(l)} h^{(l-1)}(U) + \mathbf{b}^{(l)})$. We set $h^{(0)}(U):=U$. In the following theorem, we provide a detailed statement of Theorem~\ref{thm:fixed_weight_variance}.

\begin{theorem}[A detailed statement of Theorem~\ref{thm:fixed_weight_variance}]
Let $U \in \mathbb{R}^{d^{(0)}}$ be a random variable for input data with $\norm{U}=1$. We suppose $\mathrm{tr}(\mathrm{Var}[ h^{(L-1)}(U) \mid \Theta^{(L-1)}]) = 1-\beta$. Then, for $k \in[d^{(L)}]$ the following inequality holds.
\begin{align*}
    \frac{ \mathrm{Var}[  \mathbb{E}[ (h^{(L)}(U))_k \mid \Theta^{(L)} ]] }{ \mathrm{Var}( (h^{(L)}(U))_k )} \geq  \beta.
\end{align*}
\label{thm:fixed_weight_variance_detail}
\end{theorem}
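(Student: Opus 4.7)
The plan is to apply the law of total variance and then bound the numerator and denominator separately. Writing $Z_k := (\mathbf{W}^{(L)})_k^T h^{(L-1)}(U) + \mathbf{b}^{(L)}_k$ so that $(h^{(L)}(U))_k = \phi(Z_k)$, I would decompose $\mathrm{Var}[\phi(Z_k)] = V_1 + V_2$, where $V_1 := \mathbb{E}[\mathrm{Var}[\phi(Z_k) \mid \Theta^{(L)}]]$ and $V_2$ is the numerator in the theorem statement. The claim is equivalent to $V_1 \leq (1-\beta)(V_1+V_2)$, so it suffices to upper-bound $V_1$ and lower-bound $V_1+V_2$ by matching multiples of $1/d^{(L)}$.

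For the upper bound on $V_1$, I would use that ReLU is $1$-Lipschitz, so $\mathrm{Var}[\phi(X)] \leq \mathrm{Var}[X]$ for any real-valued $X$. Applied conditionally on $\Theta^{(L)}$, this yields $\mathrm{Var}[\phi(Z_k) \mid \Theta^{(L)}] \leq (\mathbf{W}^{(L)})_k^T \mathrm{Var}[h^{(L-1)}(U) \mid \Theta^{(L-1)}] (\mathbf{W}^{(L)})_k$. Taking the outer expectation and using that $(\mathbf{W}^{(L)})_k$ is independent of $\Theta^{(L-1)}$ with $\mathbb{E}[(\mathbf{W}^{(L)})_k (\mathbf{W}^{(L)})_k^T] = I/d^{(L)}$, the identity $\mathbb{E}[w^T A w] = \mathrm{tr}(A)/d^{(L)}$ combined with the hypothesis $\mathrm{tr}(\mathrm{Var}[h^{(L-1)}(U) \mid \Theta^{(L-1)}]) = 1-\beta$ delivers $V_1 \leq (1-\beta)/d^{(L)}$.

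For the matching lower bound on $\mathrm{Var}[\phi(Z_k)]$, I would exploit that $Z_k$ is symmetric around $0$, since both $\mathbf{W}^{(L)}$ and $\mathbf{b}^{(L)}$ have centered Gaussian entries that are independent of $h^{(L-1)}(U)$. Symmetry gives the identities $\mathbb{E}[\phi(Z_k)^2] = \tfrac{1}{2}\mathbb{E}[Z_k^2]$ and $\mathbb{E}[\phi(Z_k)] = \tfrac{1}{2}\mathbb{E}[|Z_k|]$, while a direct computation yields $\mathbb{E}[Z_k^2] = (\mathbb{E}\|h^{(L-1)}(U)\|^2 + 1)/d^{(L)}$. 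Conditioning on $h^{(L-1)}(U)$ makes $Z_k$ a centered Gaussian with known variance, so a Jensen bound on $\mathbb{E}[|Z_k|]^2$ combined with these identities yields $\mathrm{Var}[\phi(Z_k)] \geq c/d^{(L)}$ for an explicit $c > 0$, which together with the $V_1$ bound produces the ratio estimate.

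The main obstacle I anticipate is tightness of constants: the crude Lipschitz estimate on $V_1$ and the Jensen-type lower bound on $V_1+V_2$ typically leave a multiplicative gap, so one must work carefully to recover the factor $(1-\beta)$ exactly rather than a constant multiple of it. A sharper and more illuminating route is to write $V_2 = \mathrm{Cov}[\phi(Z_k(U)), \phi(Z_k(U'))]$ for iid replicas $U, U'$ sharing $\Theta^{(L)}$, then condition on $(h^{(L-1)}(U), h^{(L-1)}(U'))$ to make $(Z_k, Z_k')$ bivariate centered Gaussian with correlation $\rho$ determined by the inner product of the previous-layer activations, and invoke the arc-cosine kernel identity $\mathbb{E}[\phi(Z_k)\phi(Z_k') \mid h, h'] \propto \sqrt{1-\rho^2} + \rho(\pi - \arccos \rho)$. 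The ratio $V_2/(V_1+V_2)$ then becomes an explicit expectation in $\rho$, and a one-variable calculus check verifying $\sqrt{1-\rho^2} + \rho(\pi - \arccos\rho) - 1 \geq (\pi-1)(2\rho-1)$ on $[0,1]$ closes the argument after identifying $\beta$ with $2\mathbb{E}[\rho]-1$ via the trace hypothesis, which also explains why $\beta$ has the interpretation of an average cosine similarity of previous-layer outputs.
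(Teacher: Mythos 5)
Your decomposition via the law of total variance matches the paper's opening move exactly, and your worry about constants in the first route is well-founded: the naive Lipschitz bound $\mathrm{Var}[\phi(Z_k)\mid\Theta^{(L)}] \leq \mathrm{Var}[Z_k\mid\Theta^{(L)}]$ only gives $V_1 \leq (1-\beta)/d^{(L)}$, and pairing this with the rectified-Gaussian/Jensen lower bound $V_1+V_2 \geq \frac{\pi-1}{\pi d^{(L)}}$ (using $\mathbb{E}\|h^{(L-1)}(U)\|^2=1$) yields only $V_1/(V_1+V_2) \leq (1-\beta)\,\pi/(\pi-1)$, and $\pi/(\pi-1) \approx 1.47 > 1$, so the target $(1-\beta)$ is missed. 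The idea you are missing is the paper's factor-of-two sharpening of the $V_1$ bound via sign-symmetry of $(\mathbf{W}^{(L)},\mathbf{b}^{(L)})$: writing $Z = \mathbf{W}^{(L)} h^{(L-1)}(U) + \mathbf{b}^{(L)}$ and combining $\phi(x)^2 + \phi(-x)^2 = x^2$, $\phi(x)-\phi(-x)=x$, and $a^2+b^2 \geq (a-b)^2$ for $a,b\geq 0$, one gets $\mathrm{Var}[\phi(Z_k)\mid\Theta^{(L)}] + \mathrm{Var}[\phi(-Z_k)\mid\Theta^{(L)}] \leq \mathrm{Var}[Z_k\mid\Theta^{(L)}]$; symmetry then halves it, giving $\mathbb{E}[\mathrm{Var}[\phi(Z_k)\mid\Theta^{(L)}]] \leq \tfrac{1}{2}(1-\beta)/d^{(L)}$, and now $\tfrac{1}{2}\cdot\tfrac{\pi}{\pi-1}<1$ closes the argument.

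Your arc-cosine route is a genuinely different proof, and under the idealization $\|h^{(L-1)}(U)\|=1$ almost surely it does go through. The replica identity $V_2 = \mathrm{Cov}[\phi(Z_k(U)),\phi(Z_k(U'))]$ for $U,U'$ i.i.d. sharing $\Theta^{(L)}$ is correct, and conditioning on $(h,h')$ gives the Cho--Saul formula with $\rho = (1+h^\top h')/2 \in [\tfrac12,1]$; together with $\mathbb{E}[\phi(Z_k)^2]=1/d^{(L)}$, $\mathbb{E}[\phi(Z_k)]^2=1/(\pi d^{(L)})$, and the identification $\beta = 2\mathbb{E}[\rho]-1$ forced by the trace hypothesis when $\|h\|=1$ a.s., the claim reduces to the pointwise inequality $g(\rho) := \sqrt{1-\rho^2}+\rho(\pi-\arccos\rho)-1-(\pi-1)(2\rho-1)\geq 0$ on $[0,1]$, which holds because $g'(\rho)=2-\pi-\arccos\rho<0$ and $g(1)=0$. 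The one caveat is that the theorem only grants $\mathbb{E}\|h^{(L-1)}(U)\|^2=1$ rather than $\|h^{(L-1)}(U)\|=1$ a.s., so both the exact form of $\rho$ and $\beta=2\mathbb{E}[\rho]-1$ acquire error terms once activation norms fluctuate; the paper's symmetry argument avoids ever introducing $\rho$ and thus avoids this. Still, your route is a legitimate and more structural alternative under that normalization, and it makes the appearance of the constant $\beta$ as an average cosine similarity fully explicit rather than leaving it to the informal discussion that follows the theorem.
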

\paragraph{The relationship between $\beta$ and the cosine similarity} The trace parameter $\beta = 1-\mathrm{tr}(\mathrm{Var}[ h^{(L-1)}(U) \mid \Theta^{(L-1)}])$ captures the cosine similarity of the $(L-1)$-th layer outputs because of the following equality. For independently and identically distributed random variables $U_1$ and $U_2$, we have
\begin{align*}
     2 \mathrm{tr}(\mathrm{Var}[ h^{(L-1)}(U_1) \mid \Theta^{(L-1)} ]) &= \mathbb{E} \left[ \norm{ h^{(L-1)}(U_1) - h^{(L-1)}(U_2) }^2 \mid \Theta^{(L-1)} \right]\\
     &\approx 2(1- \mathbb{E}[\cos( h^{(L-1)}(U_1), h^{(L-1)}(U_2))]).
\end{align*}
The last approximation is due to $\norm{h^{(L-1)}(U_1)} \approx 1$ under the variance conditions on $\mathbf{W}^{(l)}$ and $\mathbf{b}^{(l)}$ \citep[Lemma 7.1]{allen2019convergence}. That is, $\mathbb{E}[\cos( h^{(L-1)}(U_1), h^{(L-1)}(U_2))]$ and $\beta$ are close to each other. It is plausible in practice to assume that $\beta$ is close to one when the depth $L$ is large because the variance of an intermediate output given $\Theta^{(L-1)}$ is likely to be small due to the cone effect.

\begin{proof}[Proof of Theorem~\ref{thm:fixed_weight_variance_detail}]
By the law of total variance, for any $k \in [d^{(L)}]$, we have 
\begin{align}
    \frac{ \mathrm{Var}[  \mathbb{E}[ (h^{(L)}(U))_k \mid \Theta^{(L)} ]] }{ \mathrm{Var}( (h^{(L)}(U))_k )} &= 1 - \frac{ \mathbb{E}[ \mathrm{Var}[ (h^{(L)}(U))_k \mid \Theta^{(L)} ]] }{ \mathrm{Var}( (h^{(L)}(U))_k ) }
    \label{eqn:total_variance}
\end{align}

[Step 1]
For $k \in [d^{(L)}]$, a conditional distribution of $(\mathbf{W}^{(L)} h^{(L-1)}(U) + \mathbf{b}^{(L)})_k$ given $\Theta^{(L-1)}$ and $U$ is a Gaussian distribution with zero mean and $(1+ h^{(L-1)}(U)^T h^{(L-1)}(U))/d^{(L)}$ variance, we have
\begin{align*}
    \mathbb{E}[ \phi (\mathbf{W}^{(L)} h^{(L-1)}(U) + \mathbf{b}^{(L)})_k ] ^2 
    &=  \mathbb{E}[\sqrt{1+ h^{(L-1)}(U)^T h^{(L-1)}(U) }]^2/(2 \pi d^{(L)})\\
    \mathbb{E}[ \phi (\mathbf{W}^{(L)} h^{(L-1)}(U) + \mathbf{b}^{(L)})_k ^2]
    &= (1+ \mathbb{E}[h^{(L-1)}(U)^T h^{(L-1)}(U)])/d^{(L)},
\end{align*}
and
\begin{align}
    &\mathrm{Var}( (h^{(L)}(U))_k ) \notag \\
    =& \mathbb{E}[ \phi (\mathbf{W}^{(L)} h^{(L-1)}(U) + \mathbf{b}^{(L)})_k ^2] - \mathbb{E}[ \phi (\mathbf{W}^{(L)} h^{(L-1)}(U) + \mathbf{b}^{(L)})_k ] ^2 \notag  \\
    \geq& \frac{(1+ \mathbb{E}[h^{(L-1)}(U)^T h^{(L-1)}(U)])}{d^{(L)}} \frac{\pi-1}{2\pi }. \label{eqn:denominator}
\end{align}
The last inequality is from Jensen's inequality $\mathbb{E}[\sqrt{1+U^TU}] \leq \sqrt{1+\mathbb{E}[U^TU]}$.

[Step 2] For $k \in d^{(L)}$, we now consider $\mathbb{E}[\mathrm{Var}[ (h^{(L)}(U))_k \mid \Theta^{(L)} ]] = \mathbb{E}[\mathrm{Var}[ \phi(\mathbf{W}^{(L)} h^{(L-1)}(U) + \mathbf{b}^{(L)} )_k \mid \Theta^{(L)} ]]$.
By the symmetricity of $\mathbf{W}^{(L)}$ and $\mathbf{b}^{(L)}$, we have 
\begin{align*}
    \mathbb{E}[ \mathrm{Var}[ \phi(\mathbf{W}^{(L)} h^{(L-1)}(U) + \mathbf{b}^{(L)} )_k \mid \Theta^{(L)} ]]
    &= \frac{1}{2} \mathbb{E} \Big[ \mathrm{Var}[ \phi(\mathbf{W}^{(L)} h^{(L-1)}(U) + \mathbf{b}^{(L)} )_k \mid \Theta^{(L)}] \\
    &+ \mathrm{Var}[ \phi( -(\mathbf{W}^{(L)} h^{(L-1)}(U) + \mathbf{b}^{(L)}) )_k \mid \Theta^{(L)}] \Big].
\end{align*}
Using the characteristic of the ReLU function, we have $\phi(\mathbf{W}^{(L)} h^{(L-1)}(U) + \mathbf{b}^{(L)} )_k ^2 + \phi( -(\mathbf{W}^{(L)} h^{(L-1)}(U) + \mathbf{b}^{(L)}) )_k ^2 = (\mathbf{W}^{(L)} h^{(L-1)}(U) + \mathbf{b}^{(L)})_k ^2$ and
\begin{align*}
    &\mathbb{E}[ \phi(\mathbf{W}^{(L)} h^{(L-1)}(U) + \mathbf{b}^{(L)})_k \mid \Theta^{(L)} ]^2 + \mathbb{E}[  \phi(-(\mathbf{W}^{(L)} h^{(L-1)}(U) + \mathbf{b}^{(L)}))_k \mid \Theta^{(L)}]^2 \\
    >& \Big( \mathbb{E}[ \phi(\mathbf{W}^{(L)} h^{(L-1)}(U) + \mathbf{b}^{(L)})_k \mid \Theta^{(L)} ] - \mathbb{E}[ \phi( -(\mathbf{W}^{(L)} h^{(L-1)}(U) + \mathbf{b}^{(L)}))_k \mid \Theta^{(L)} ] \Big)^2 \\ 
    =& ( \mathbf{W}^{(L)} \mathbb{E}[h^{(L-1)}(U) \mid \Theta^{(L-1)}] + \mathbf{b}^{(L)} )_k ^2.
\end{align*}
Therefore, 
\begin{align*}
    &\mathrm{Var}[ \phi(\mathbf{W}^{(L)} h^{(L-1)}(U) + \mathbf{b}^{(L)} )_k \mid \Theta^{(L)}] + \mathrm{Var}[ \phi( -(\mathbf{W}^{(L)} h^{(L-1)}(U) + \mathbf{b}^{(L)}) )_k \mid \Theta^{(L)}]\\
    <& \mathbb{E}[ ( \mathbf{W}^{(L)} h^{(L-1)} (U) + \mathbf{b}^{(L)} )_k ^2 \mid \Theta^{(L)}] - ( \mathbf{W}^{(L)} \mathbb{E}[h^{(L-1)}(U) \mid \Theta^{(L-1)}] + \mathbf{b}^{(L)} )_k ^2 \\
    =& \mathbf{W}_k ^T \mathrm{Var}[ h^{(L-1)}(U) \mid \Theta^{(L-1)}] \mathbf{W}_k,
\end{align*}
where $\mathbf{W}_k ^T$ is the $k$-th row of the weight matrix $\mathbf{W}$. Thus, an upper bound for $\mathbb{E}[ \mathrm{Var}[ (h^{(L)}(U))_k \mid \Theta^{(L)} ]]$ is
\begin{align}
    \mathbb{E}[ \mathrm{Var}[ \phi(\mathbf{W}^{(L)} h^{(L-1)}(U) + \mathbf{b}^{(L)} )_k \mid \Theta^{(L)} ]] 
    &< \frac{1}{2} \mathbb{E}[ \mathbf{W}_k ^T \mathrm{Var}[ h^{(L-1)}(U) \mid \Theta^{(L-1)}] \mathbf{W}_k ] \notag \\
    &= \frac{1}{2} \mathrm{tr}( \mathrm{Var}[ h^{(L-1)}(U) \mid \Theta^{(L-1)}] )/ d^{(L)}. \label{eqn:numerator}
\end{align}

[Step 3] Finally, combining Equations~\eqref{eqn:denominator} and~\eqref{eqn:numerator}
\begin{align*}
    \frac{ \mathbb{E}[ \mathrm{Var}[ (h^{(L)}(U))_k \mid \Theta^{(L)} ]] }{ \mathrm{Var}( (h^{(L)}(U))_k ) } &< \frac{\mathrm{tr}( \mathrm{Var}[ h^{(L-1)}(U) \mid \Theta^{(L-1)}] )}{1+ \mathbb{E}[h^{(L-1)}(U)^T h^{(L-1)}(U)]} \frac{\pi }{\pi-1} \\
    &< 1-\beta.
\end{align*}
The last inequality is due to the fact $\mathbb{E}[h^{(L-1)}(U)^T h^{(L-1)}(U)]=1$ when $\norm{U}=1$ and $\pi < 2(\pi-1)$. Due to Equation~\eqref{eqn:total_variance}, it concludes a proof.
\end{proof}

\end{document}